\newtheorem{theorem}{Theorem}
\newtheorem{proposition}[theorem]{Proposition}
\title{Shift of Pairwise Similarities for Data Clustering}
\author{ Morteza Haghir Chehreghani  \\
                Department of Computer Science and Engineering\\
                Chalmers University of Technology\\
                SE-412 96 Gothenburg, Sweden\\
                \texttt{morteza.chehreghani@chalmers.se}\\
}
\begin{document}
\maketitle

\begin{abstract}
Several clustering methods (e.g., \emph{Normalized Cut} and \emph{Ratio Cut}) divide the \emph{Min Cut} cost function by a cluster dependent factor (e.g., the size or the degree of the clusters), in order to yield a more balanced partitioning. We, instead, investigate adding such regularizations to the original cost function. We first consider the case where the regularization term is the sum of the squared size of the clusters,  and then generalize it to adaptive regularization of the pairwise similarities. This leads to shifting (adaptively) the pairwise similarities which might make some of them negative. We then study the connection of this method to \emph{Correlation Clustering} and then propose an  efficient \emph{local search} optimization algorithm with fast theoretical convergence rate to solve the new clustering problem. In the following, we investigate the shift of pairwise similarities on some  common clustering methods, and finally, we demonstrate the superior performance of the method by extensive experiments on different datasets.

\keywords{Unsupervised learning \and Clustering  \and Shift of pairwise similarities \and Local search optimization \and Correlation clustering}

\end{abstract}

\section{Introduction}

Given a set of objects, clustering is concerned with grouping them in such a way that objects of the same group are more similar to each other (according to a predefined similarity measure), compared to those in different groups.
This task plays a fundamental role in several data analytics applications. Examples are  image segmentation (to detect the items in images),  document clustering (for the purpose of document organization, topic identification or efficient information retrieval), data compression, and analysis of (e.g., transportation) networks and graphs.  Clustering itself is not a specific method, rather is a general machine learning task to be addressed. The task can be solved via several methods that differ significantly in the way they define the notion of clusters and the way they extract them.
The concept of clustering is originated from anthropology and then was used in psychology~\cite{tryon1939cluster,Bailey1994}, in particular for trait theory classification in personality psychology~\cite{Cattell1943}.

A wide range of clustering methods introduce a cost function whose minimal solution provides a clustering solution. \emph{$K$-means} is a common cost function which is defined by the within-cluster sum of squared distances from the means~\cite{Macqueen67somemethods}. The data can be demonstrated by a graph, whose nodes represent the objects and the edge weights are the pairwise similarities between the objects. Then, a wide range of different graph partitioning methods can be applied to produce the clusters. Arguably, the most basic graph-based method is the \emph{Min Cut} (\emph{Minimum $K$-Cut}) cost function \cite{MinCutLeighton,MinCut-Wu}, in which the goal is to  partition the graph into exactly $K$ connected components (clusters) such that the sum of the inter-clusters edge weights is minimal. As we will see, the \emph{Min Cut} cost function often yields separating singleton clusters, in particular when the clusters have diverse densities. To overcome such problem, several clustering methods normalize the \emph{Min Cut} clusters to render more balanced clusters. For example, they propose to normalize the \emph{Min Cut} clusters by the size of the clusters (\emph{Ratio Assoc}~\cite{HofmannB97} and \emph{Ratio Cut}~\cite{ChanSZ94}) or the degree of  the clusters (\emph{Normalized Cut}~\cite{JS:JM:PAMI:2000}).

We note that balanced clustering has been studied for featured-based (vectorial) data as well, in particular with the \emph{$K$-means} method.
The method in \cite{balancedKmean2014} develops balanced clustering via formulating it as a assignment problem by the Hungarian algorithm, which suffers from a high runtime (cubic w.r.t. the number of objects). Another work models this problem as a least square linear regression with a balance constraints and uses the method of augmented Lagrange multipliers to solve it \cite{LiuHNL17}. The work in \cite{8621917} considers \emph{$K$-means} as the main clustering method and the respective cluster variances as the penalty term.  Then, \cite{ijcai2019-414} yields balanced clustering with convex regularization which makes the optimization more efficient. In the following, \cite{DING202028} studies balanced \emph{$K$-center}, \emph{$K$-median}, and \emph{$K$-means} in high dimensions with theoretical approximate algorithms. Finally, \cite{8490741} proposes a  balanced clustering framework that  utilizes both local and global information.
However, in this paper, we consider the `graph-based' balanced clustering variant, where we assume the clustering is applied to a given graph, instead of data features.

While most of graph clustering cost functions assume a nonnegative matrix of pairwise similarities as input, \emph{Correlation Clustering} assumes that the similarities can be negative as well. This  cost function was first introduced on graphs with only $+1$ and $-1$ edge weights~\cite{BansalBC04}, and  then it was generalized to graphs with arbitrary positive and negative edge weights~\cite{DemaineEFI06}.

Such graph clustering cost functions are often NP-hard~\cite{JS:JM:PAMI:2000,BansalBC04,DemaineEFI06}. However, the respective optimal solution can be approximated in some way. A category of methods work based on eigenvector analysis of the Laplacian matrix. \emph{Spectral Clustering}~\cite{JS:JM:PAMI:2000,Ng01onspectral} was the first method which exploits the information from eigenvectors. It forms a low-dimensional embedding by the bottom eigenvectors of the Laplacian of the similarity matrix and then applies $K$-means to produce the final clusters. A more recent method, called \emph{Power Iteration Clustering} (PIC)~\cite{LinC10PIC}, instead of embedding the data into a $K$-dimensional space, approximates an eigenvalue-weighted linear combination of all the eigenvectors of the normalized similarity matrix via early stopping of the power iteration method. \emph{P-Spectral Clustering} (PSC)~\cite{Buhler:2009PSpec} is another spectral approach that proposes a non-linear generalization of the Laplacian and then performs an iterative splitting method based on its second eigenvector.

An alternative graph-based clustering approach has been developed in the context of discrete time dynamical systems and evolutionary game theory which is based on performing replicator dynamics~\cite{Pavan:2007,NgMA12,LiuLY13}. \emph{Dominant Set Clustering} (DSC)~\cite{Pavan:2007} is an iterative method which at each iteration, peels off a cluster by performing a replicator dynamics until its convergence. The method in~\cite{LiuLY13} proposes an iterative clustering algorithm in two shrink and expansion steps, which helps to extract many small and dense clusters in large datasets. 
The method in~\cite{BuloPB11}, called \emph{InImDyn}, instead of replicator dynamics, suggests to use a population dynamics motivated from the analogy with infection and immunization processes within a population of players.

In this paper,  we investigate adding the regularization terms to the \emph{Min Cut} cost function, in order to avoid creation of small singleton sets of clusters. We first consider the case where the regularization is the sum of the squared size of the clusters, weighted by the parameter $\alpha$. This regularization leads to a simple shift transformation of the input, i.e.,  subtracting the pairwise similarities by $\alpha$, which provides a straightforward quadratic  cost function. We further extend the regularization to the pairwise similarities and employ an adaptive shift of the pairwise similarities which does not require fixing a regularization parameter in advance.
The size constrained \emph{Min Cut} then constitutes a special case of the latter form.
 Such a shift might render some pairwise similarities to be negative. We then study the connection to \emph{Correlation Clustering}, another cost function which performs on both positive and negative similarities, and  conclude the equivalence of these two methods given the shifted (regularized) pairwise similarities in a direct and straightforward way (beyond the argument based on algorithmic reduction proposed in \cite{DemaineEFI06}).
However, our method, called \emph{Shifted Min Cut}, provides a principled way to deduce  such negative edge weights (adaptively). Thereafter, we develop an efficient optimization method based on \emph{local search} to solve the new optimization problem. We further discuss the fast theoretical convergence rate of this local search algorithm.
 In the following, we study the impact of shifting the pairwise similarities on some  common flat and hierarchical clustering methods  where they often exhibit an invariant behaviour with respect to the shift of pairwise similarities, unlike the basic \emph{Min Cut} cost function. Finally, we perform extensive experiments on several real-world datasets to study the performance of \emph{Shifted Min Cut} compared to the alternatives.

This work is an extension of our previous work  \cite{ChehreghaniICDM17} wherein we additionally, i) provide an argument on the theoretical convergence rate of the local search algorithm based on the connection to an optimized variant of Frank-Wolfe algorithm, ii) discuss the shift of pairwise similarities on several other clustering methods, and iii) elaborate further the existing experimental results and perform extra  studies on real-world datasets.
We have later found out that the work in \cite{ChenZJ05} suggests a similar idea for regularization of \emph{Min Cut} in order to yield balanced clusters. However, there are several fundamental differences between \cite{ChenZJ05} and our work: i) they study size constrained \emph{Min Cut} for bi-partitioning (i.e., for only two clusters), whereas we model it for arbitrary $K$ clusters. Then, to generate more clusters than two, they propose an iterative (sequential) bi-partitioning which might cause the re-scaling problem. ii) Their method requires fixing  critical hyperparameters often in a heuristic way, whereas our method does not include  such hyperparameters. iii) Beyond size constrained \emph{Min Cut}, we extend the method to refined regularization of the pairwise similarities that yields an adaptive regularization (shift) of the cost function. This adaptive regularization, not only provides adaptivity with respect to the type of the relations, but also obviates the need for fixing critical hyperparameters. iv) We consider that the regularization renders some of the pairwise similarities to be negative, and thereby, we study the connection between such a regularized (\emph{Shifted}) \emph{Min Cut} method and \emph{Correlation Clustering}. However, \cite{ChenZJ05} does not study such a connection. v) To optimize the respective cost function, we employ  integration of the regularizations into shifting the pairwise similarities and develop an efficient local search algorithm that enjoys a linear convergence rate.
\cite{ChenZJ05}, instead, develops approximate spectral solutions. vi) We demonstrate the performance of the method on several real-world datasets with respect to different evaluation criteria, whereas \cite{ChenZJ05} only studies the  mutual information evaluation criterion on two datasets. In particular, we investigate both the cost function and its optimization separately.

The rest of the paper is organized as following. In section 2, we introduce the notations and the definitions. Then, in section 3, we describe the regularization and the connection to shifting the pairwise similarities. In this section, we  extend the method to adaptive regularization (shift) of the pairwise similarities. In section 4, we study the connection between \emph{Shifted Min Cut} and \emph{Correlation Clustering}, and, in section 5,  we develop an efficient local search optimization method for the cost function. In section 6, we study the consequence of shifting pairwise relations in some other (flat and hierarchical) clustering methods. In section 7, we experimentally investigate the different aspects of the method on several real-world datasets, and finally, in section 8, we conclude the paper.

\section{Notations and Definitions}

The data is given by a set of $n$ objects $\mathbf O=\{1,...,n\}$ and the corresponding matrix of pairwise similarities $\mathbf X = \{\mathbf X_{ij}\}, \forall i,j \in \mathbf O$. Thus, the data can be represented by (an undirected) graph $\mathcal G(\mathbf O,\mathbf X)$, where the objects $\mathbf O$ constitute the nodes of the graph and $\mathbf X_{ij}$ represents the weight of the edge  between $i$ and $j$.   Then,  the goal is to partition the objects (the graph) into $K$ coherent groups which are distinguishable from each other. The clustering solution is encoded in $\mathbf c\in \{1,...,K\}^n$, i.e., $\mathbf c_i$ indicates the cluster label of the $i^{th}$ object. 
The vector $\mathbf c$ can be also represented via the co-clustering matrix $\mathbf H \in \{0,1\}^{n \times n}$.
\begin{equation}
\mathbf H_{ij} = \begin{cases} 1 & \mbox{iff } \mathbf c_i=\mathbf c_j \\ 0, & \mbox{otherwise.}\end{cases}
\end{equation}
$\mathcal C$ denotes the space of all different clustering solutions.

Moreover, we assume $\mathbf O_k\subset \mathbf O$ includes the members of the $k^{\text{th}}$ cluster, i.e.,
\begin{eqnarray}
    \mathbf O_k := \{i\in\mathbf O\;:\;\mathbf c_i=k\}\; .
\end{eqnarray}

$|\mathbf O_k|$ refers to the size of the $k^{\text{th}}$ cluster.

\section{Shift of Pairwise Similarities for Clustering}

Different graph-based clustering methods often consider the \emph{Min Cut} cost function as a base method which is defined by
\begin{eqnarray}
    R^{MC}(c,\mathbf X) &=& \sum_{k=1}^{K} \sum_{\substack{k'=1,\\k'\ne k}}^{K} \sum_{i\in\mathbf O_k} \sum_{j \in\mathbf O_{k'}} \mathbf X_{ij} \, .
    \label{eq:minCutKway}
\end{eqnarray}
This cost function has a tendency to split small sets of objects, since the cost increases with the number of inter-cluster edge weights, i.e., the edges connecting the different clusters. Figure~\ref{fig:MinCutMalik}  illustrates such a situation for two clusters~\cite{JS:JM:PAMI:2000}. We assume that the edge weights are inversely proportional to the distances between the objects. It is observed that \emph{Min Cut} favors splitting objects $i$ or $j$, instead of performing a more balanced split. In fact, any cut that splits one of the objects on the right half will yield a smaller cost than the cut that partitions the objects into the left and right halves. This issue is particularly problematic when the intra-cluster edge weights are heterogeneous among different clusters. 
Thus, several methods propose to normalize the \emph{Min Cut} clusters by a cluster depending factor, e.g., the size of clusters (\emph{Ratio Assoc}~\cite{HofmannB97} and \emph{Ratio Cut}~\cite{ChanSZ94}) or the degree of clusters (\emph{Normalized Cut}~\cite{JS:JM:PAMI:2000}).
\begin{figure}[t!]
    \centering
    \includegraphics[width=0.35\textwidth]{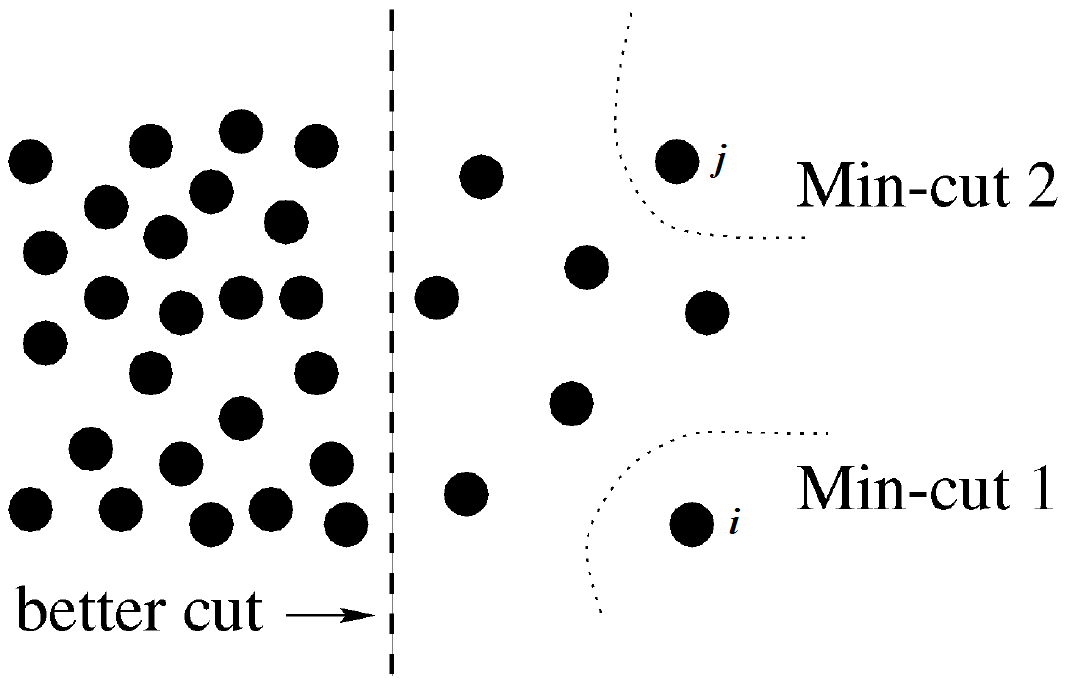}
\caption{The \emph{Min Cut} cost function has a bias to split small (singleton) sets of objects. Any cut that splits one of the objects on the right half will have smaller cost than the cut that splits the objects into the left and right halves. The figure has been adapted from~\cite{JS:JM:PAMI:2000}.
\label{fig:MinCutMalik} }
\end{figure}

We investigate an alternative approach to yield the occurrence of more balanced clusters. Instead of normalizing (dividing) the \emph{Min Cut} cost function by a cluster-dependent function, we consider adding such a regularization to the original cost function, i.e.,

\begin{equation}
    R^{new}(\mathbf c, \mathbf X,\alpha)  =  R^{MC}(\mathbf c,\mathbf X) + \alpha \,.\, r(\mathbf c, \mathbf X) ,
\end{equation}

where $r(\mathbf c, \mathbf X)$ indicates the regularization.
Note that this formulation involves the two free choices $\alpha$ and $ r(\mathbf c, \mathbf X)$, thereby, it yields a richer family of alternative methods. We first focus on the case where  $r(\mathbf c, \mathbf X)$ is the sum of the squared size of the  clusters\footnote{The \emph{Min Cut} cost function is quadratic with respect to the number of edges, therefore, to be consistent, we use the squared form of the cluster cardinalities.}, i.e.,

\begin{equation}
    R^{new}(\mathbf c, \mathbf X) = R^{MC}(\mathbf c,\mathbf X) + \alpha \sum_{k=1}^{K} |\mathbf O_k|^2 \; .
\label{eq:newMincut}
\end{equation}

Thereby,
\begin{enumerate}
  \item if $\alpha<0$, then the term $\alpha \sum_{k=1}^{K} |\mathbf O_k|^2$ is minimal when only the singleton clusters (objects) are separated. Thus, this choice does not help to avoid occurrence of singleton clusters, rather, it accelerates.
  \item If $\alpha>0$, then $\alpha \sum_{k=1}^{K} |\mathbf O_k|^2$ is minimal for balanced clusters, i.e., when $|\mathbf O_k|\approx n/K \ , \forall k\in \{1,...,K\}$. This leads to equalize the size of clusters. We note that $|\mathbf O_k|$'s are integer numbers, but $n/K$ is not necessarily an integer. Thus, we may arbitrarily set some of the $|\mathbf O_k|$'s to $\left \lceil{n/K}\right \rceil$  and some others to $\left \lfloor{n/K}\right \rfloor$ such that $\sum_{k=1}^{K} |\mathbf O_k|=n$. The order would not change the minimum.
\end{enumerate}

The cost function in Eq.~\ref{eq:newMincut} can be further written as

\begin{align}
    &R^{new}(\mathbf c, \mathbf X, \alpha) = R^{MC}(\mathbf c,\mathbf X) + \alpha \sum_{k=1}^{K} |\mathbf O_k|^2 \nonumber\\
    &\quad = \sum_{k=1}^{K} \sum_{k'\ne k}^{K} \sum_{i\in \mathbf O_k} \sum_{j\in \mathbf O_{k'}} \mathbf X_{ij} + \alpha \sum_{k=1}^{K} |\mathbf O_k|^2  \nonumber\\
    &\quad =\sum_{k=1}^{K} \sum_{k'\ne k}^{K} \sum_{i\in \mathbf O_k} \sum_{j\in \mathbf O_{k'}} \mathbf X_{ij} + \sum_{k=1}^K \sum_{i ,j\in \mathbf O_k}  \mathbf X_{ij}
     - \sum_{k=1}^K \sum_{i ,j\in \mathbf O_k}  \mathbf X_{ij} + \sum_{k=1}^K \sum_{i ,j\in \mathbf O_k}  \alpha \nonumber\\
	&\quad =\sum_{k=1}^{K} \sum_{k'=1}^{K} \sum_{i\in \mathbf O_k} \sum_{j\in \mathbf O_{k'}} \mathbf X_{ij} - \sum_{k=1}^K \sum_{i ,j\in \mathbf O_k}  (\mathbf X_{ij}-\alpha) \nonumber\\
    &\quad = \underbrace{\sum_{i ,j\in \mathbf O}  \mathbf X_{ij}}_{constant} - \sum_{k=1}^K \sum_{i ,j\in \mathbf O_k}  (\mathbf X_{ij}-\alpha) \nonumber\\
    &\quad = - \sum_{k=1}^K \sum_{i ,j\in \mathbf O_k}  (\mathbf X_{ij}-\alpha) + constant\, .
\label{eq:sMinCut}
\end{align}

Therefore, we define

\begin{center}
\fbox{$R^{SMC}(\mathbf c, \mathbf X, \alpha) = - \sum_{k=1}^K \sum_{i ,j\in \mathbf O_k} (\mathbf X_{ij}-\alpha)$.}
\end{center}

Thus, we employ a shifted variant of \emph{Min Cut} cost function (called \emph{Shifted Min Cut}), wherein all pairwise similarities are subtracted by a positive parameter $\alpha$, such that some of the pairwise similarities might become negative.
It makes sense that the regularization on the size of the clusters becomes connected to the pairwise similarities, as, at the end, pairwise relations are  responsible for creating the clusters. Thus, by tuning them properly, one should be able to obtain the desired balanced clusters. Thereby, the cluster level regularization is effectively applied to the representation space, where, as will be discussed, it yields modeling and computational advantages.

\begin{figure}[t!]
    \centering
    \includegraphics[width=0.5\textwidth]{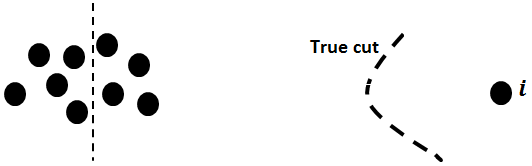}
\caption{The impact of the shift parameter $\alpha$ on the results of the \emph{Shifted Min Cut} cost function. A very large $\alpha$ might yield splitting large clusters, instead of separating true small clusters.
\label{fig:MinCutNegative} }
\end{figure}

This formulation provides a rich family of alternative clustering methods where different regularizations are induced by different values of $\alpha$. However, choosing a very large $\alpha$ can lead to equalizing the size of the clusters that are inherently very unbalanced in size. For example, consider the dataset shown in Figure~\ref{fig:MinCutNegative}. We assume that the edge weights  are inversely proportional to the pairwise distances. Then, we subtract all pairwise similarities by a very large number.
Therefore, the pairwise similarities become very large but negative numbers which renders  \emph{Shifted Min Cut}  to produce equal-size clusters, even though a correct cut should separate only the object $i$ from the rest.
Thus, in practice one needs to examine different  values of $\alpha$, and choose the one that yields the best results, or is preferred by the user. However, this procedure might be computationally expensive, and, moreover, the user might not be able to validate the correct solution among many different alternatives, due to lack of enough prior knowledge, supervision or side information.
For this reason, we employ a particular shift of pairwise similarities which takes the connectivity of the objects into account and does not need fixing any free parameter. 
\paragraph{Adaptive shift of pairwise similarities.} Different pairwise similarities might need different shifts, depending on the type and the density of the clusters that the respective objects belong to.  Therefore, we relax the constraints of the formulation in Eq. \ref{eq:sMinCut}  and consider a separate shift parameter for every pairwise similarity $\mathbf{X}_{ij}$.

\begin{equation}\label{eq:sMinCut_adaptive}
  R^{SMC}(\mathbf c, \mathbf X, \{\alpha_{ij}\}) = - \sum_{k=1}^K \sum_{i ,j\in \mathbf O_k} (\mathbf X_{ij}-\alpha_{ij}), \qquad \alpha_{ij} = \alpha_{ji}  \, .
\end{equation}

The formulation in Eq. \ref{eq:sMinCut_adaptive} already involves  the formulation in Eq. \ref{eq:sMinCut} as a special case  where all $\alpha_{ij}$'s are fixed by a constant. 
To determine $\alpha_{ij}$'s properly, a reasonable approach is to shift the pairwise similarity $\mathbf X_{ij}$ between $i$ and $j$ adaptively with respect to the similarities between $i$ and all the other objects and as well as the similarities between $j$ and the other objects.
For this purpose, we shift $\mathbf X_{ij}$ such that the sum of the pairwise similarities between $i$ and all the other objects becomes zero, and the same holds for $j$ too.
In this way, we have

\begin{equation}
\alpha_{ij} =  \frac{1}{n} \sum_{p=1}^{n} \mathbf X_{ip}
+ \frac{1}{n} \sum_{p=1}^{n} \mathbf X_{pj} - \frac{1}{n^2} \sum_{p=1}^n \sum_{q=1}^n \mathbf X_{pq} \, .
\label{eq:centralizedShift}
\end{equation}

Summing up the regularizations for all pairs of objects, we have (we assume $\mathbf X$ is symmetric):

\begin{eqnarray}
\sum_{k=1}^{K} \sum_{i,j \in \mathbf O_k} \alpha_{ij} &=& \sum_{k=1}^{K} \sum_{i,j \in \mathbf O_k} \left ( \frac{1}{n} \sum_{p=1}^{n} \mathbf X_{ip}
+ \frac{1}{n} \sum_{p=1}^{n} \mathbf X_{pj} - \frac{1}{n^2} \sum_{p=1}^n \sum_{q=1}^n \mathbf X_{pq} \right ) \nonumber \\
&=& \frac{2}{n} \sum_{k=1}^{K}|\mathbf O_k| deg(k) -  \frac{\beta}{n^2}  \sum_{k=1}^{K}|\mathbf O_k|^2 \, ,
\label{eq:adaptive_reg}
\end{eqnarray}

where $deg(k)$ is the degree of cluster $k$, i.e., $deg(k)=\sum_{i \in \mathbf O_k}\sum_{p=1}^{n} \mathbf X_{ip}$, and constant $\beta$ is the sum of the given pairwise similarities, i.e.,  $\beta=\sum_{p=1}^n \sum_{q=1}^n \mathbf X_{pq}$.  Therefore, the  adaptive regularization yields a  tradeoff between the size of the clusters and the degree of the clusters. The former is used in \emph{Ratio Assoc} and the latter in \emph{Normalized Cut}, both in the denominator. However, here a combination of these two is assumed, but as additive terms.

Therefore, the new shifted similarity $\mathbf S_{ij}$ is obtained by

\begin{equation}
\mathbf S_{ij} = \mathbf X_{ij} - \frac{1}{n} \sum_{p=1}^{n} \mathbf X_{ip}
- \frac{1}{n} \sum_{p=1}^{n} \mathbf X_{pj} + \frac{1}{n^2} \sum_{p=1}^n \sum_{q=1}^n \mathbf X_{pq} \, .
\label{eq:centralizedShift}
\end{equation}

It is easy to check that $\mathbf S$ is symmetric, provided that $\mathbf X$ is symmetric.
It can be shown that sum of the rows and the columns of $\mathbf S$ are equal to zero. For example, for a fixed row $i$ we have

\begin{align}
\mathbf \sum_{j=1}^{n} \mathbf S_{ij} =&\sum_{j=1}^{n} \mathbf X_{ij} - \frac{1}{n} \sum_{j=1}^{n} \sum_{p=1}^{n} \mathbf X_{ip}
- \frac{1}{n} \sum_{j=1}^{n} \sum_{p=1}^{n} \mathbf X_{pj} + \frac{1}{n^2} \sum_{j=1}^{n} \sum_{p=1}^n \sum_{q=1}^n \mathbf X_{pq}\nonumber \\
=&\sum_{j=1}^{n} \mathbf X_{ij} - \frac{n}{n} \sum_{p=1}^{n} \mathbf X_{ip}
- \frac{1}{n} \sum_{j=1}^{n} \sum_{p=1}^{n} \mathbf X_{pj} + \frac{n}{n^2} \sum_{p=1}^n \sum_{q=1}^n \mathbf X_{pq}\nonumber \\
& = 0 + 0 \, .
\label{eq:centralizedSum}
\end{align}

The adaptive shift in Eq.~\ref{eq:centralizedShift} can be written in matrix form as

\begin{equation}
\mathbf S = \mathbf T \mathbf X \mathbf T \, ,
\label{eq:ShiftMatrix}
\end{equation}
where the $n \times n$ matrix $\mathbf T$ is defined by
\begin{equation}
\mathbf T = \mathbf I_n  - \frac{1}{n}\mathbf U  \, .
\end{equation}

$\mathbf U$ is an $n \times n$ matrix whose all elements are $1$. \footnote{Due to shift invariance of \emph{Ratio Assoc}, a similar shift is used in \cite{roth03PAMI} to render the respective eigenvalues non-negative and thus obtain an embedding for the pairwise relations. However, here, the shift is used for a totally different purpose and it yields changing the size of clusters.}

Thus, according to Eqs. \ref{eq:sMinCut} and \ref{eq:sMinCut_adaptive}, the new cost function is written by
\begin{eqnarray}
R^{SMC}(\mathbf c, \mathbf S) &=& - \sum_{k=1}^K \sum_{i ,j\in \mathbf O_k}  \mathbf S_{ij} \label{eq:sMaxCor} \\
&\equiv& \sum_{k=1}^{K} \sum_{k'\ne k}^{K} \sum_{i\in \mathbf O_k} \sum_{j\in \mathbf O_{k'}} \mathbf S_{ij} \,  .
\label{eq:sMinCutTemp}
\end{eqnarray}
As an alternative to the adaptive shift, a proper shift can be obtained by investigating few pairwise relations by a user (i.e., a kind of weak supervision). In this setting, the user tells us how the actual pairwise relations should look like for a small subset of them, i.e., weather they are in the same cluster (positive shift) or different clusters (negative shift). Then, given this feedback, we can generalize them to all the pairwise relations. We may train a model, e.g. a neural network, which learns the shift depending on the specifications of the respective edge and objects. Such an approach can be even combined with our method for adaptive shift of pairwise similarities, where the later is used as an initial guess for the shifted pairwise relations and then they are fine tuned further using the user feedbacks if needed.
This formulation also provides a convenient way to encode constraints and prior knowledge such as ‘objects $x$ and $y$ must be together’, and ‘objects $p$ and $q$ must be in different clusters’.

\section{Relation to Correlation Clustering}

\emph{Correlation Clustering} is a clustering cost function that partitions a graph with positive and negative edge weights. The cost function sums the disagreements, i.e., the sum of negative intra-cluster edge weights plus the sum of positive inter-cluster edge weights. The respective cost function on general graphs is defined by~\cite{DemaineEFI06}

\begin{equation}
R^{CC}(c,\mathbf X) = \sum_{(i,j) \in E^{<+>}} \mathbf X_{ij}(1-\mathbf H_{ij}) - \sum_{(i,j) \in E^{<->}} \mathbf X_{ij} \mathbf H_{ij} \, ,
\label{eq:CC_Demaine}
\end{equation}
where $E^{<->}$ and $E^{<+>}$ respectively indicate the set of the edges with negative and with positive weights. The approximation scheme in  \cite{DemaineEFI06} reduces  \emph{Min Cut} to  \emph{Correlation Clustering} in order to obtain a logarithmic approximation factor for \emph{Correlation Clustering}. It also develops a reduction from \emph{Correlation Clustering} to \emph{Min Cut} to conclude the equivalence of these two cost functions.
Here, we elaborate that these two cost functions are identical and represent the same objective (given the shifted pairwise similarities) in a direct and straightforward way without using the more complicated reduction argument. In addition, \cite{DemaineEFI06} assumes that the number of clusters is hidden in the cost function (as defined in Eq. \ref{eq:CC_Demaine}). However, we study the equivalence for any  arbitrary number of clusters $K$. As shown in \cite{ChehreghaniBB12,FrankCB11}, optimizing \emph{Correlation Clustering} without a constraint on the number of clusters can lead to overfitting and unrobust solutions, whereas fixing the number of clusters may avoid these issues. Therefore, we consider the setting where the number of clusters $K$ is explicitly specified in the cost function and the user has the possibility to fix it in advance. Finally, the reduction-based argument in \cite{DemaineEFI06} yields the equivalence of the optimal solutions between \emph{Min Cut} and \emph{Correlation Clustering} and the respective approximation and hardness results. We, in addition, conclude the equivalence of any local optimal solution for the two cost functions, which is important when using \emph{local search} algorithms to optimize the cost functions.

For a fixed $K$, the \emph{Correlation Clustering} cost function can be written as  \cite{ChehreghaniBB12,FrankCB11}

\begin{align}
&R^{CC}(c,\mathbf X) = \underbrace{\frac{1}{2}\sum_{k=1}^K \sum_{i,j \in \mathbf O_k} (|\mathbf X_{ij}|-\mathbf X_{ij})}_{a}
+ \underbrace{\frac{1}{2}\sum_{k=1}^{K} \sum_{\substack{k'=1,\\k'\ne k}}^{K} \sum_{i\in\mathbf O_k} \sum_{j \in\mathbf O_{k'}} (|\mathbf X_{ij}|+ \mathbf X_{ij})}_{b} \, .
\label{eq:CC}
\end{align}

The first term (called $a$) sums the intra-cluster negative edge weights, whereas the second term (called $b$) sums the inter-cluster positive edge weights. We separately expand each term.

\begin{align}
a =& \frac{1}{2}\sum_{k=1}^K \sum_{i,j \in \mathbf O_k} |\mathbf X_{ij}| - \frac{1}{2}\sum_{k=1}^K \sum_{i,j \in \mathbf O_k} \mathbf X_{ij} \nonumber \\
=& \frac{1}{2}\sum_{k=1}^K \sum_{i,j \in \mathbf O_k} |\mathbf X_{ij}| - \underbrace{\frac{1}{2}\sum_{k=1}^{K} \sum_{k'=1}^{K} \sum_{i\in\mathbf O_k} \sum_{j \in\mathbf O_{k'}} \mathbf X_{ij}}_{constant}
+ \frac{1}{2}\sum_{k=1}^{K} \sum_{\substack{k'=1,\\k'\ne k}}^{K} \sum_{i\in\mathbf O_k} \sum_{j \in\mathbf O_{k'}} \mathbf X_{ij} \,.
\end{align}

Similarly, we expand term $b$.

\begin{eqnarray}
b &=& \frac{1}{2}\sum_{k=1}^{K} \sum_{\substack{k'=1,\\k'\ne k}}^{K} \sum_{i\in\mathbf O_k} \sum_{j \in\mathbf O_{k'}} |\mathbf X_{ij}|
+ \frac{1}{2}\sum_{k=1}^{K} \sum_{\substack{k'=1,\\k'\ne k}}^{K} \sum_{i\in\mathbf O_k} \sum_{j \in\mathbf O_{k'}} \mathbf X_{ij} \, .
\end{eqnarray}

Then, by summing $a$ and $b$ we obtain

\begin{align}
&R^{CC}(c,\mathbf X) = constant   \nonumber \\
& \quad + \underbrace{\frac{1}{2}\sum_{k=1}^K \sum_{i,j \in \mathbf O_k} |\mathbf X_{ij}| + \frac{1}{2}\sum_{k=1}^{K} \sum_{\substack{k'=1,\\k'\ne k}}^{K} \sum_{i\in\mathbf O_k} \sum_{j \in\mathbf O_{k'}} |\mathbf X_{ij}|}_{constant}
+ \underbrace{\sum_{k=1}^{K} \sum_{\substack{k'=1,\\k'\ne k}}^{K} \sum_{i\in\mathbf O_k} \sum_{j \in\mathbf O_{k'}} \mathbf X_{ij}}_{R^{MC}(\mathbf c, \mathbf X)} \,.
\end{align}

Thus, \emph{Correlation Clustering} and \emph{Min Cut} are equivalent cost functions, i.e.,
\begin{enumerate}
\item The cost functions share the same optimal solution, i.e., $\arg\min_{\mathbf c}R^{MC}(\mathbf c,\mathbf X) = \arg\min_{\mathbf c}R^{CC}(\mathbf c,\mathbf X)$.
\item The costs differences are the same, i.e., $\forall \mathbf c \in \mathcal C:  R^{MC}(\mathbf c,\mathbf X) - \min_{\mathbf c}R^{MC}(\mathbf c,\mathbf X) = R^{CC}(\mathbf c,\mathbf X) - \min_{\mathbf c}R^{CC}(\mathbf c,\mathbf X).$ This is in particular relevant when defining for example a Boltzmann distribution over the solution space $\mathcal C$.
\end{enumerate}
Thus, \emph{Correlation Clustering}, similar to \emph{Shifted Min Cut}, is an extension of  \emph{Min Cut}  which deals with both negative and positive edge weights. However, there are fundamental differences between these two methods:
\begin{enumerate}
  \item \emph{Correlation Clustering} assumes that the matrix of pairwise positive and negative similarities is given (which might be nontrivial), whereas \emph{Shifted Min Cut} proposes a principled way to yield clustering of positive and negative similarities via regularizing the base \emph{Min Cut} cost function. Thus, \emph{Shifted Min Cut} provides an explicit and straightforward interpretation of the clustering problem.
  \item The form of the \emph{Shifted Min Cut} cost function expressed in Eq. \ref{eq:sMaxCor} provides efficient function evaluations (e.g., for optimization) compared to the \emph{Correlation Clustering} cost function in Eq.~\ref{eq:CC} or the base \emph{Min Cut} cost function in Eq.~\ref{eq:minCutKway}. The cost functions in Eqs~\ref{eq:CC} and~\ref{eq:minCutKway} are quadratic with respect to $K$, the number of clusters, whereas the cost function in Eq.~\ref{eq:sMaxCor} is linear.
\end{enumerate}

\section{Optimization of the \emph{Shifted Min Cut} Cost Function}

Finding the optimal solution of the standard \emph{Min Cut}  with non-negative edge weights, i.e., when $\mathbf X_{ij}\ge 0, \forall i,j$, is  well-studied, for which there exist several polynomial time algorithms, e.g., $\mathcal O(n^4)$~\cite{goldschmidt94} and $\mathcal O(n^2 \log^3 n)$~\cite{Karger:1996}.
However, finding the optimal solution of the \emph{Shifted Min Cut} cost function, wherein some edge weights are negative, is NP-hard~\cite{BansalBC04,DemaineEFI06} and even is APX-hard~\cite{DemaineEFI06}.
Therefore, we develop a \emph{local search} method which  computes a local minimum of the cost function in Eq.~\ref{eq:sMaxCor}. The effectiveness of such a greedy strategy is well studied for different clustering cost functions, e.g., $K$-means~\cite{Macqueen67somemethods},  kernel $K$-means~\cite{Scholkopf:1998} and in particular several graph partitioning methods~\cite{Dhillon:2004,Dhillon:EtAl:05}.\footnote{Consistently, with \emph{Correlation Clustering} we observe a significantly better performance of the local search algorithm compared to approximation schemes such as those proposed in~\cite{BansalBC04,DemaineEFI06}.} In this approach, we start with a random clustering solution and then we iteratively assign each object to the cluster that yields a maximal reduction in the cost function. We repeat this procedure until no further change of assignments is achieved during a complete round of investigation of the objects, i.e., then a local optimal solution is attained.

At each iteration of the aforementioned procedure, one needs to evaluate the cost of assigning every object to each of the clusters. The cost function is quadratic, thus a single evaluation might take $\mathcal O(Kn^2)$ runtime. Thereby, if the local search converges after $t$ iterations, then, the total runtime will be $\mathcal O(tKn^3)$ for $n$ objects, which might be computationally expensive.

However, we do not need to recalculate the cost function for every individual evaluation.
Let $R^{SMC}(\mathbf c_{o \to l},\mathbf S)$ denote the cost of the clustering solution $\mathbf c$ wherein object $o$ is assigned to cluster $l$. At each step of the local search algorithm, we need to evaluate the cost $R^{SMC}(\mathbf c_{o \to l'},\mathbf S), l' \ne l$ given  $R^{SMC}(\mathbf c_{o \to l},\mathbf S)$.

The cost $R^{SMC}(\mathbf c_{o \to l},\mathbf S)$ is written by
\begin{equation}
R^{SMC}(\mathbf c_{o \to l},\mathbf S) = - \sum_{k=1}^{K} \sum_{\substack{i,j \in \mathbf O_k\\i,j \ne o}} \mathbf S_{ij} - \sum_{\substack{i \in \mathbf O_l\\ i \ne o}} \mathbf (\mathbf S_{io} + \mathbf S_{oi}) - \mathbf S_{oo} \, .
\end{equation}
Similarly, the cost $R^{SMC}(\mathbf c_{o \to l'},\mathbf S), l' \ne l$ is obtained by
\begin{align}
&R^{SMC}(\mathbf c_{o \to l'},\mathbf S) = - \sum_{k=1}^{K} \sum_{\substack{i,j \in \mathbf O_k\\i,j \ne o}} \mathbf S_{ij} - \sum_{\substack{i \in \mathbf O_{l'}\\ i \ne o}} \mathbf (\mathbf S_{io} + \mathbf S_{oi}) - \mathbf S_{oo}  \nonumber \\
&= R^{SMC}(\mathbf c_{o \to l},\mathbf S)  + \sum_{\substack{i \in \mathbf O_l\\ i \ne o}} \mathbf (\mathbf S_{io} + \mathbf S_{oi}) - \sum_{\substack{i \in \mathbf O_{l'}\\ i \ne o}} \mathbf (\mathbf S_{io} + \mathbf S_{oi}) \,.
\label{eq:efficientOpt}
\end{align}
Thus, given $R^{SMC}(\mathbf c_{o \to l},\mathbf S)$  the runtime of a new evaluation of the cost function $R^{SMC}(\mathbf c_{o \to l'},\mathbf S)$  is $\mathcal O(n)$. Hence, the total runtime of the local search method will be $\mathcal O(tn^2)$.
Therefore, at the beginning, we compute a random initial solution, wherein each object is assigned randomly to one of $K$ clusters,  and compute the respective cost. At each iteration, we use Eq.~\ref{eq:efficientOpt} to investigate the cost of assigning an object to the other clusters than the current one. Then, we assign the object to the cluster that yields a maximal reduction in the cost.
We might repeat the local search algorithm with several random initializations and at end, choose a solution with a minimal cost. Note that even the efficient evaluation and optimization of the variants in Eq.~\ref{eq:minCutKway} and  Eq.~\ref{eq:CC} would yield  $\mathcal O(tKn^2)$ total runtime, i.e.,  $K$ times slower than the variant expressed in Eq.\ref{eq:sMaxCor}.

We note that this technique can be employed with other optimization or inference methods as well, such as MCMC methods and simulated annealing.

\paragraph{On the convergence rate of the local search optimization.} With the co-authors,  we have shown in \cite{ThielCD19} that for  \emph{Correlation Clustering}, Frank-Wolfe optimization with line search for the update parameter (to find the optimal learning rate) is equivalent to the local search algorithm. On the other hand, we have established convergence rate of $\mathcal O(\frac{1}{t})$ for Frank-Wolfe optimization applied to \emph{Correlation Clustering}  \cite{ThielCD19} ($t$ indicates the optimization step). As discussed before, given the shifted pairwise similarities, \emph{Shifted Min Cut} is equivalent to \emph{Correlation Clustering}. Thus, the same argument holds for the aforementioned local search algorithm for \emph{Shifted Min Cut}, i.e., \emph{Shifted Min Cut} enjoys the convergence rate of $\mathcal O(\frac{1}{t})$.
This convergence rate should be compared with the convergence rate of  $\mathcal O(\frac{1}{\sqrt t})$ for general non-convex (non-concave) functions \cite{Reddi16} that applies to many other clustering objectives such \emph{Ratio Assoc}, \emph{Normalized Cut} and \emph{Dominant Set Clustering}, i.e., optimizing \emph{Shifted Min Cut} yields a faster theoretical convergence rate compared to many other alternatives.

\section{Shift Analysis of Other Clustering Methods}

In this section, we investigate the impact of shifting the pairwise similarities on some common flat and hierarchical clustering methods.

\paragraph{Shift of pairwise similarities for flat clustering.}
It is obvious that $K$-means and Gaussian Mixture Models (GMMs) are invariant with respect to the shift of data features. Since these methods perform directly on the data features, shifting refers to adding constant $\alpha$ to all the features. Under this shift, the centroids (in $K$-means) and the means (in GMM) are shifted by $\alpha$  as well, but their proportional distances stay the same.
The other parameters, i.e., the clustering assignments (in $K$-means), and the assignment probabilities, covariance matrices and weights (in GMM) do not change. One might assume that by shift only the location of the clusters is affected without modifying the cluster memberships.
A similar argument applies to a density-based clustering method such as DBSCAN \cite{DBSCAN} wherein shifting  data features does not modify the clustering solution, except a consistent shift of the geographical locations of the clusters together.

As discussed in \cite{roth03PAMI}, when shifting the pairwise similarities by $\alpha$, the \emph{Ratio Assoc} and \emph{Ratio Cut} cost functions stay invariant, i.e., their optimal solutions stay the same.
By shifting the pairwise similarities by  $\alpha$, the \emph{Ratio Assoc} cost function is written as

\begin{align}
    &R^{SRA}(\mathbf c, \mathbf{X},\alpha) = -\sum_{k=1}^{K} \sum_{i,j\in \mathbf O_k} \frac{\mathbf X_{ij}+\alpha}{\vert \mathbf O_k \vert} \nonumber \\
    &\qquad\qquad = {-\sum_{k=1}^{K} \sum_{i,j\in \mathbf O_k} \frac{\mathbf X_{ij}}{\vert \mathbf O_k \vert}} -\sum_{k=1}^{K} \sum_{i,j\in \mathbf O_k} \frac{\alpha}{\vert \mathbf O_k \vert} \nonumber \\
    &\qquad\qquad= {-\sum_{k=1}^{K} \sum_{i,j\in \mathbf O_k} \frac{\mathbf X_{ij}}{\vert \mathbf O_k \vert}} -\sum_{k=1}^{K} \frac{\alpha \vert \mathbf O_k \vert^2}{\vert \mathbf O_k \vert} \nonumber \\
    &\qquad\qquad= \underbrace{-\sum_{k=1}^{K} \sum_{i,j\in \mathbf O_k} \frac{\mathbf X_{ij}}{\vert \mathbf O_k \vert}}_{R^{RA}(\mathbf c, \mathbf{X})} -\underbrace{\alpha n}_{constant}.
\end{align}
Therefore, the \emph{Ratio Assoc} cost function is \emph{invariant} under shifting the pairwise similarities.
Similar to \emph{Ratio Assoc}, the \emph{Shifted Ratio Cut} cost function can be written as
\begin{align}
    &R^{SRC}(\mathbf c, \mathbf X, \alpha) = \sum_{k=1}^{K} \frac{\sum_{i\in \mathbf O_k}\sum_{j\in \mathbf O \setminus \mathbf O_k} \mathbf X_{ij}+\alpha} {|\mathbf O_k|} \nonumber \\
    & = \sum_{k=1}^{K} {\frac{\sum_{i\in \mathbf O_k}\sum_{j\in \mathbf O \setminus \mathbf O_k} \mathbf X_{ij}} {|\mathbf O_k|}} + \sum_{k=1}^{K} \frac{\sum_{i\in \mathbf O_k}\sum_{j\in \mathbf O \setminus \mathbf O_k}\alpha} {|\mathbf O_k|} \nonumber \\
    &  = \sum_{k=1}^{K} {\frac{\sum_{i\in \mathbf O_k}\sum_{j\in \mathbf O \setminus \mathbf O_k} \mathbf X_{ij}} {|\mathbf O_k|}} + \sum_{k=1}^{K} \frac{\alpha |\mathbf O_k| (n-|\mathbf O_k|)} {|\mathbf O_k|} \nonumber \\
    &  = \sum_{k=1}^{K} \underbrace{\frac{\sum_{i\in \mathbf O_k}\sum_{j\in \mathbf O \setminus \mathbf O_k}\mathbf X_{ij}} {|\mathbf O_k|}}_{R^{RC}(\mathbf c, \mathbf X)} + \underbrace{\alpha n (K-1)}_{constant}.
\end{align}

Thereby, both  \emph{Ratio Assoc} and \emph{Ratio Cut} cost functions are \emph{invariant} under shifting the pairwise similarities. One can show that this holds in general for every clustering cost function  that normalizes the clusters by the size of the clusters, i.e., size-normalized (divided) clustering cost functions stay \emph{invariant} with respect to the shift of pairwise similarities.

On the other hand, the \emph{Normalized Cut} cost function when the pairwise similarities are shifted is written by

\begin{equation}
    R^{SNC}(\mathbf c, \mathbf X,\alpha) = \sum_{k=1}^{K} \frac{\sum_{i\in \mathbf O_k}\sum_{j\in \mathbf O \setminus \mathbf O_k} \mathbf X_{ij}+\alpha} {\sum_{i \in \mathbf O_k}\sum_{j\in \mathbf O}\mathbf X_{ij}+\alpha}.
\end{equation}

It turns out that this cost function \emph{is not} shift invariant in general, contrary to the two previous alternatives. However, for the special case of almost balanced clusters, i.e., $|\mathbf O_k| \approx n/K,\,\; \forall 1\leq k \leq K$,\footnote{Similar to \emph{Shifted Min Cut}, $n/K$ might not be an integer number. Then, we consider $\left \lceil{n/K}\right \rceil$  and $\left \lfloor{n/K}\right \rfloor$ instead of $n/K$.} and similar intra-cluster similarity distribution among all clusters, all the row-sums of the similarity matrix $\mathbf X$ tend to be close to each other. The objects then share the same degree, i.e., $\sum_{j=1}^n \mathbf X_{ij} \approx  constant$. In this case, the \emph{Normalized Cut} cost functions becomes equivalent to the \emph{Ratio Assoc} cost function \cite{roth03PAMI}.
This analysis explains the similar performance of such graph partitioning methods in large-scale comparison studies, e.g., for image segmentation, where clusters have balanced and similar structures \cite{SoundararajanS01,roth03PAMI}.

\emph{Ratio Cut}, despite normalizing the cut by the size of clusters, intends to separate small clusters, as demonstrated in \cite{JS:JM:PAMI:2000,phdthesis_Morteza}. For this reason, \emph{Normalized Cut} has proposed to normalize the cut by the degree of the clusters, rather than the size of the clusters. An alternative way to overcome this problem is to apply a stronger constraint on the size of the clusters. Using this idea, \emph{P-Spectral Clustering} \cite{Buhler:2009PSpec} proposes a nonlinear generalization of spectral clustering based on the second eigenvector of the graph $p$-Laplacian which is then interpreted as a generalization of graph clustering models such as \emph{Ratio Cut}. \emph{P-Spectral Clustering} is an iterative  clustering procedure that at each step performs a bi-partitioning of one of the existing clusters until $K$ clusters are constructed using a nonlinear spectral method.
The underlying respective cost function for bi-partitioning  into two sets $\mathbf O_a$ and $\mathbf O_b$ is given by ($p >1$)

\begin{eqnarray}
    R^{PSC}(c,\mathbf X) = \sum_{i\in\mathbf O_a}\sum_{j\in\mathbf O_b} \mathbf X_{ij}\left( \frac{1}{|\mathbf O_a|^{\frac{1}{p-1}}} + \frac{1}{|\mathbf O_b|^{\frac{1}{p-1}}} \right)^{p-1}.
\end{eqnarray}

In \cite{phdthesis_Morteza}, we have introduced \emph{Adaptive Ratio Cut} (ARC) as a generalization of the  cost function to yield $K$  clusters:

\begin{eqnarray}
    R^{ARC}(c,\mathbf{X}) &=& \sum_{k=1}^K \sum_{k'=k+1}^K \sum_{i\in\mathbf O_k}\sum_{j\in\mathbf O_{k'}} \mathbf X_{ij} \left(\frac{1}{|\mathbf O_k|^{\frac{1}{p-1}}}+ \frac{1}{|\mathbf  O_{k'}|^{\frac{1}{p-1}}} \right)^{p-1}.
\end{eqnarray}

For the special case of $p=2$,  \emph{Adaptive Ratio Cut} is equivalent to the standard \emph{Ratio Cut} cost function.
However, unlike \emph{Ratio Cut}, it is easy to see that \emph{Adaptive Ratio Cut} is not shift invariant, as the shift parameter $\alpha$ cannot be factored out from the cost function.

\paragraph{Shifted Dominant Set Clustering.}
This clustering method computes the clusters via performing replicator dynamics. It has been shown that the solutions of a replicator dynamics correspond to the solutions of the following quadratic program \cite{SchusterRD83,GVK97}.
\begin{equation}
\max_\mathbf v \; f(\mathbf v) = \mathbf v^{\texttt{T}} \mathbf X \mathbf v, \;\; \texttt{s.t.} \; \mathbf v \ge \mathbf 0\, , \sum_{i=1}^{n}\mathbf v_i = 1\, ,
\label{eq:quadProg}
\end{equation}
where the \emph{n}-dimensional \emph{characteristic vector} $\mathbf v$ determines the participation of the objects to the solution.

Thus, to study the impact of the shift on DSC, we consider the shifted variant of the quadratic program. In \cite{mlChehreghani16} we have elaborated the impact of such a shift based on the off-diagonal shift argument in   \cite{PavanPHierarchy03}. It yields
\begin{eqnarray}
    f(\mathbf v, \alpha) &=& \mathbf v^{\texttt{T}} (\mathbf X + \alpha\, \mathbf e \mathbf e^{\texttt{T}}) \mathbf v \nonumber\\
      &=& \mathbf v^{\texttt{T}} \mathbf X \mathbf v + \mathbf v^{\texttt{T}}  \alpha\, \mathbf e \mathbf e^{\texttt{T}} \mathbf v  \nonumber\\
    &=& \mathbf v^{\texttt{T}} \mathbf X \mathbf v + \alpha\, \underbrace{(\mathbf v^{\texttt{T}}  \mathbf e)}_{=1} \, \underbrace{(\mathbf e^{\texttt{T}} \mathbf v)}_{=1}  \nonumber\\
    &=& \mathbf v^{\texttt{T}} \mathbf X \mathbf v + \alpha \, ,
\end{eqnarray}
where $\mathbf e = (1,1,...1)^{\texttt{T}}$ is a vector of ones.

Therefore, \emph{Dominant Set Clustering} is \emph{invariant} under shifting the pairwise similarities.

However, it has been proposed in \cite{PavanPHierarchy03} to shift the diagonal entries of the similarity matrix by a negative value, in order to obtain coarser clusters, which yields computing a hierarchy of clusters. The clusters obtained from the unshifted similarity matrix appear at the lowest level of the hierarchy. The larger the negative shift is the coarser the clusters are. Performing a negative shift is equivalent to adding the same shift but with a positive sign to the off-diagonal pairwise similarities. Thereby, the shifted matrix is still non-negative and has a null diagonal, i.e. satisfies the conditions of \emph{Dominant Set Clustering}.

One can think of performing a negative shift on the off-diagonal pairwise similarities to compute a finer representation of the clusters. However, this type of shift might violate the non-negativity and null diagonal constraints. On the other hand, according to our experiments, a negative shift is effectively equivalent to applying a larger cut-off threshold when peeling off the clusters. In \cite{mlChehreghani16} we have proposed such a shift to accelerate the appearance of clusters for DSC.

\paragraph{Shift of pairwise similarities for hierarchical clustering.}
Hierarchical clustering methods, unlike flat clustering, produce clusters at multiple levels. A main category of such methods  first consider each object in a separate cluster, and then at each step, combine the two clusters with a \emph{minimal} distance according to some criterion until only one cluster is left at the highest level.

A cluster at an arbitrary level  is represented by a set of objects belong to that, e.g., by  $\mathbf u$ or $\mathbf v$. A hierarchical  clustering  solution can be represented by a dendrogram (tree) $T$ such that,
\noindent i) each node $\mathbf v$ in $T$ consists of a non-empty subset of the objects that belong to cluster  $\mathbf v$, and
ii)  the overlapping clusters  have a parent-child relation, i.e., one is the (grand) parent of the other.

We use $dist(\mathbf u, \mathbf v)$ to refer to the inter-cluster distance between clusters $\mathbf u$ and $\mathbf v$. It can be defined according to different criteria. Three common criteria for hierarchical clustering are \emph{single} linkage, \emph{complete} linkage and \emph{average} linkage.
Given the matrix of (inter-object) pairwise dissimilarities $\mathbf D=\{\mathbf D_{ij}\}, i,j \in \mathbf O$, the \emph{single} linkage criterion \cite{sneath1957dn09j} defines the distance between  every two clusters as the distance between their nearest members:
\begin{equation}
  dist(\mathbf u,\mathbf v) = \min_{i \in \mathbf u, j \in \mathbf v} \mathbf D_{ij} \, .
\end{equation}

On the other hand, \emph{complete} linkage  \cite{lance67hierarchical} considers the distance between their farthest members:
\begin{equation}
  dist(\mathbf u,\mathbf v) = \max_{i \in \mathbf u, j \in \mathbf v} \mathbf D_{ij} \, .
\end{equation}

Finally,  \emph{average} linkage \cite{sokal58} uses the average of the inter-cluster distances  as the distance between the two clusters:

\begin{equation}
  dist(\mathbf u,\mathbf v) = \sum_{i \in \mathbf u, j \in \mathbf v} \frac{\mathbf D_{ij}}{|\mathbf u||\mathbf v|} \, .
\end{equation}

In the following we show that these methods, which perform based on pairwise inter-cluster distances, are shift-invariant (Proposition \ref{thm:agg_shift}). 

\begin{proposition}
 {Single} linkage, {complete} linkage and {average} linkage methods are invariant with respect to the shift of the pairwise dissimilarities $\mathbf D$ by constant $\alpha$.
  \label{thm:agg_shift}
\end{proposition}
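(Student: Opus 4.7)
The plan is to verify the claim separately for each of the three linkage criteria by showing that shifting every pairwise dissimilarity by a constant $\alpha$ shifts every inter-cluster distance $dist(\mathbf u,\mathbf v)$ by the same constant $\alpha$. Since the hierarchical merging procedure is driven entirely by the relative order of these inter-cluster distances (at each step we merge the pair with minimal $dist$), a uniform additive shift preserves all arg-min decisions, and therefore the produced dendrogram is unchanged.

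First I would handle single linkage: applying the shift $\mathbf D_{ij} \mapsto \mathbf D_{ij} + \alpha$ inside the minimum and pulling the constant out gives $\min_{i\in\mathbf u,j\in\mathbf v}(\mathbf D_{ij}+\alpha) = \min_{i\in\mathbf u,j\in\mathbf v}\mathbf D_{ij} + \alpha$. Complete linkage is entirely analogous with $\max$ in place of $\min$. For average linkage, I would use the fact that the double sum runs over exactly $|\mathbf u||\mathbf v|$ pairs, so
\begin{equation}
\sum_{i\in\mathbf u,j\in\mathbf v}\frac{\mathbf D_{ij}+\alpha}{|\mathbf u||\mathbf v|} = \sum_{i\in\mathbf u,j\in\mathbf v}\frac{\mathbf D_{ij}}{|\mathbf u||\mathbf v|} + \alpha,
\end{equation}
which shows the inter-cluster distance is again shifted by exactly $\alpha$.

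To conclude, I would note that in all three criteria the shifted inter-cluster distance equals the original one plus the same constant $\alpha$, independent of the pair $(\mathbf u,\mathbf v)$. Hence for any two candidate pairs $(\mathbf u_1,\mathbf v_1)$ and $(\mathbf u_2,\mathbf v_2)$, the sign of $dist(\mathbf u_1,\mathbf v_1) - dist(\mathbf u_2,\mathbf v_2)$ is preserved after the shift. An inductive argument on the merging level then shows that at every step the agglomerative algorithm selects the same pair to merge as in the unshifted run, so the full dendrogram (and hence the clustering at every level) is identical.

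I do not expect any real obstacle here; the content is a direct linearity/homogeneity check for $\min$, $\max$, and a uniformly weighted average. The only mild subtlety is that for average linkage one must observe the cardinality $|\mathbf u||\mathbf v|$ of the summation domain matches the denominator so that the $\alpha$ terms telescope to a single $\alpha$; otherwise the constant would not factor out cleanly.
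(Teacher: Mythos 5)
Your proposal is correct and follows essentially the same route as the paper: for each of the three criteria you show that the shift $\mathbf D_{ij}\mapsto \mathbf D_{ij}+\alpha$ translates every inter-cluster distance by exactly $\alpha$ (pulling the constant out of the $\min$, the $\max$, and the uniformly weighted average over $|\mathbf u||\mathbf v|$ pairs), and then observe that a uniform additive shift preserves all pairwise comparisons of distances, hence the merge order and the dendrogram. Your explicit induction on the merging level only makes slightly more formal what the paper states in words; there is no substantive difference.
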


\begin{proof}
Let us show the shifted pairwise dissimilarities by  $\mathbf D^\alpha$, i.e., $\mathbf D^\alpha_{ij} = \mathbf D_{ij} + \alpha, \forall i,j \in \mathbf O$.
\begin{itemize}
  \item With shifting all the pairwise dissimilarities by $\alpha$, the  $dist(\mathbf u,\mathbf v)$  function for \emph{single} linkage is defined as
  \begin{equation}\label{eq:single_linkage_shift}
    dist(\mathbf u,\mathbf v) = \min_{i \in \mathbf u, j \in \mathbf v} \mathbf D^\alpha_{ij} = \min_{i \in \mathbf u, j \in \mathbf v} \mathbf D_{ij} + \alpha.
  \end{equation}
  Thus, if $dist(\mathbf u,\mathbf v) \le dist(\mathbf u,\mathbf w)$ holds  with respect to   $\mathbf D$, then it would also hold with respect to $\mathbf D^\alpha$ and vice versa, as they differ only by a constant in both sides of the inequality. Thus, shifting the pairwise dissimilarities by $\alpha$ does not change the order of merging the intermediate clusters and hence the final dendrogram will remain the same.
  \item With shifting all the pairwise dissimilarities by $\alpha$, the  $dist(\mathbf u,\mathbf v)$  function for \emph{complete} linkage is defined as
  \begin{equation}\label{eq:complete_linkage_shift}
    dist(\mathbf u,\mathbf v) = \max_{i \in \mathbf u, j \in \mathbf v} \mathbf D^\alpha_{ij} = \max_{i \in \mathbf u, j \in \mathbf v} \mathbf D_{ij} + \alpha.
  \end{equation}
  Thus, with the same argument as with \emph{single} linkage, shifting the  pairwise dissimilarities by $\alpha$ does not change the final \emph{complete} linkage dendrogram.
  \item With shifting all the pairwise dissimilarities by $\alpha$, the  $dist(\mathbf u,\mathbf v)$  function in \emph{average} linkage is defined as
  \begin{eqnarray}\label{eq:average_linkage_shift}
    dist(\mathbf u,\mathbf v) &=& \sum_{i \in \mathbf u, j \in \mathbf v} \frac{\mathbf D^\alpha_{ij}}{|\mathbf u||\mathbf v|}  \nonumber \\
    &=& \sum_{i \in \mathbf u, j \in \mathbf v} \frac{\mathbf D_{ij} + \alpha}{|\mathbf u||\mathbf v|}   \nonumber \\
    &=&  \left(\sum_{i \in \mathbf u, j \in \mathbf v} \frac{\mathbf D_{ij}}{|\mathbf u||\mathbf v|}\right)  + \alpha .
  \end{eqnarray}
  Thus, we use the same argument as in with \emph{single} linkage and \emph{complete} linkage, and conclude that shifting the  pairwise dissimilarities by $\alpha$ does not change the final \emph{average} linkage dendrogram.
\end{itemize}
\hfill $\square$
\end{proof}

Another category of hierarchical clustering methods such as  \emph{centroid}  linkage and \emph{Ward} linkage perform directly on data features, instead of pairwise dissimilarities. \emph{Centroid}  linkage computes a representative for each cluster and defines the inter-cluster distances according to those representatives. Similar to the case of $K$-means, shifting the data features by a constant does not change the pairwise  inter-cluster distances. The \emph{Ward} linkage \cite{Inchoate:Ward63}  aims at minimizing  the within-cluster variance at each step, i.e., the $dist(\mathbf u,\mathbf v)$ is defined as

\begin{equation}
dist(\mathbf u,\mathbf v) = \frac{|\mathbf u||\mathbf v|}{|\mathbf u|+|\mathbf v|} ||\mathbf g_{\mathbf u} -  \mathbf g_{\mathbf v}||^2 \, ,
\end{equation}
where $ \mathbf g_{\mathbf u}$ denotes the centroid  vector of  cluster $\mathbf u$. Therefore, due to shift invariance of variance, the \emph{Ward} linkage is also invariant with respect to the shift of data features.
Thereby, we can state Proposition \ref{thm:agg_shift_centroid_Ward} as following.

\begin{proposition}
 {Centroid}  linkage and {Ward} linkage  are invariant with respect to the shift of data features.
  \label{thm:agg_shift_centroid_Ward}
\end{proposition}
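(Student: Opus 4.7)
The plan is to reduce both linkage criteria to a statement about pairwise distances between cluster centroids, and then invoke a one-line calculation showing that centroid differences (and hence their squared norms) are invariant under a uniform shift of all data features. The structure will mirror the proof of Proposition \ref{thm:agg_shift}: establish that the relevant $dist(\mathbf u, \mathbf v)$ is unchanged (or changed only by an additive constant that is the same across all pairs), and conclude that the order of merges, and therefore the entire dendrogram, is preserved.

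First I would fix notation: let $\{\mathbf x_i\}_{i \in \mathbf O}$ denote the original feature vectors and let $\mathbf x_i^\alpha := \mathbf x_i + \alpha$ denote the features shifted by a constant vector $\alpha$. The centroid of a cluster $\mathbf u$ under the shifted features is
\begin{equation}
\mathbf g_{\mathbf u}^\alpha = \frac{1}{|\mathbf u|} \sum_{i \in \mathbf u} (\mathbf x_i + \alpha) = \mathbf g_{\mathbf u} + \alpha .
\end{equation}
This identity is the single workhorse for both cases.

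For \emph{centroid} linkage, the inter-cluster distance is $\|\mathbf g_{\mathbf u} - \mathbf g_{\mathbf v}\|$ (in some norm). Substituting the shifted centroids gives $\|\mathbf g_{\mathbf u}^\alpha - \mathbf g_{\mathbf v}^\alpha\| = \|(\mathbf g_{\mathbf u}+\alpha) - (\mathbf g_{\mathbf v}+\alpha)\| = \|\mathbf g_{\mathbf u} - \mathbf g_{\mathbf v}\|$. For \emph{Ward} linkage, the prefactor $\tfrac{|\mathbf u||\mathbf v|}{|\mathbf u|+|\mathbf v|}$ depends only on the cluster sizes, which are unaffected by any feature shift, and the squared-norm factor is invariant by the same argument. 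Hence in both cases $dist(\mathbf u, \mathbf v)$ is literally unchanged for every pair $(\mathbf u, \mathbf v)$ at every level of the agglomerative procedure, so the sequence of minimum-distance merges is identical and the resulting dendrogram coincides with the unshifted one.

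I do not expect any real obstacle here; the whole content of the proposition is the elementary identity $\mathbf g_{\mathbf u}^\alpha = \mathbf g_{\mathbf u} + \alpha$ combined with translation invariance of the Euclidean (or any norm-based) distance. The only subtlety worth flagging is that one must check the invariance holds not just at the initial step but at every agglomerative step, which follows because the argument applies uniformly to any subset $\mathbf u \subseteq \mathbf O$ and any pair of such subsets, and the set of candidate merges at each step is determined entirely by the current $dist(\cdot,\cdot)$ values.
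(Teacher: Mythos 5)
Your proposal is correct and follows essentially the same route as the paper, which justifies the proposition informally by noting that shifting the features shifts each centroid by the same constant (so centroid-to-centroid distances are unchanged) and that Ward's criterion is invariant by shift invariance of variance. Your version simply makes the paper's one-line argument explicit via the identity $\mathbf g_{\mathbf u}^\alpha = \mathbf g_{\mathbf u} + \alpha$ and the observation that the Ward prefactor depends only on cluster sizes; no substantive difference.
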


Finally, it is notable that  some of the improvements proposed for hierarchical clustering still preserve the invariance property with respect to the shift of pairwise distances. For example, in order to improve the robustness of hierarchical clustering, it is suggested in \cite{ChehreghaniAC08} to first apply $K$-means with many centroids (of order of $n$) and then apply the aforementioned hierarchical methods. Since both steps, i.e., $K$-means clustering and hierarchical clustering, are invariant with respect to the shift, thus one can conclude that the entire procedure remains invariant as well.
The work in \cite{ChehreghaniIJCNN2021} studies extracting all mutual linkages at every step of hierarchical clustering, instead of the smallest one, in order to provide adaptivity to diverse shapes of clusters. Since this contribution is independent of the way the inter-cluster distances are defined, then this strategy yields invariant clustering with respect to the shift of pairwise distances for methods such as \emph{single} linkage, \emph{complete} linkage and \emph{average} linkage.

\section{Experiments}
\label{sec:experiments}

We empirically  investigate the performance of \emph{Shifted Min Cut} and compare the results against several alternatives.  We perform the experiments under identical computational settings on a  core i7-4600U Intel machine with 2.7 GHz CPU and  8.00 GB internal memory.

\paragraph{Data.}

We first perform our experiments on several UCI datasets~\cite{Lichman:2013}, chosen from different domains and contexts with different type of features.
\begin{enumerate}
\item \emph{Breast Tissue}: contains $106$ electrical impedance measurements of the breast tissue samples in $6$ types (clusters) each with $10$ features. The types or clusters are `car' (carcinoma, $21$ measurements), `fad' (fibro-adenoma, $15$ measurements), `mas' (mastopathy $18$ measurements), `gla' (glandular, $16$ measurements), `con' (connective, $12$ measurements) and `adi' (adipose $22$ measurements). The features are real valued with no missing value.
\item \emph{Cloud}: consists of $2048$  vectors, where each vector includes $10$ parameters in two types (each of size $1024$) representing AVHRR images. The vectors (attributes) are real-valued and there are no missing values. The target clusters are balanced.
\item \emph{Ecoli}: a biological dataset on the cellular localization sites of $7$ types (clusters) of proteins which includes $336$ samples. The samples are represented by $8$ real-valued features. The size of the clusters are: $143, 77, 3, 7, 35, 20$ and $52$,
\item \emph{Forest Type Mapping}: a remote sensing dataset of $523$ samples with $27$ real-valued attributes collected from forests in Japan and grouped in $4$ different forest types (clusters). The clusters are:  `s' (`Sugi' forest, $159$ samples), `h' (`Hinoki' forest, $86$ samples), `d' (`Mixed deciduous' forest, $195$ samples), `o' (`Other' non-forest land, $83$ samples).
\item \emph{Heart}: dataset of heart disease that involves $303$ instances each with $75$ attributes. The attributes are diverse: categorical, integer and real where the categorical attributes are treated using one-hot encoding. The missing values are estimated by the median of the respective feature. Cluster distributions are: $164,  55,  36,  35$ and  $13$.
\item \emph{Lung Cancer}: high-dimensional lung cancer data with $32$ instances (with distribution $9$ and $23$) and $56$ integer features. There are few missing values estimated using the median of the respective feature.
\item \emph{Parkinsons}: contains $197$ biomedical voice measurements from $31$ people each represented by $23$ real-valued attributes that correspond to voice recordings. In the dataset, there are $48$ healthy  samples and $147$ other samples that belong to one of $23$ people with  Parkinson's disease.
\item  \emph{Pima Indians Diabetes}:   the data of $768$ female patients from Pima Indian heritage with $8$ attributes. The attributes  include the number of pregnancies of the patient, their BMI, insulin level, age, and so on, and they are either real numbers or integers. $268$ samples out of $768$ haze the outcome $1$ and the others ($500$ samples) have the outcome $0$.
\item \emph{SPECTF}: describes diagnosing of cardiac Single Proton Emission Computed Tomography (SPECT)  images with $44$  integer attributes (values from the $0$ to $100$) about the heart of  $267$ patients. The diagnosis is binary with the distribution of $55$ and $212$ samples.
\item \emph{Statlog ACA (Australian Credit Approval)}: contains information of $690$ credit card applications each described with $14$ features (with cluster size $383$ and $307$). The features are categorical and numerical where for categorical features we use one-hot encoding. The few missing values are estimated using the median of the respective feature.
\item \emph{Teaching Assistant}: consists of evaluations of teaching performance over $5$ semesters of $151$ teaching assistant assignments. The scores are divided into $3$ roughly equal-sized categories (`low', `medium' and `high') to form the target variables which are used as the cluster labels. The attributes are categorical and integer, where we use one-hot encoding for categorical attributes. There are no missing values.
\item \emph{User Knowledge Modeling}: contains the $403$ students' knowledge status on Electrical DC Machines with $5$ integer attributes  grouped in $4$ categories.  The labels and the cluster distributions are: `very Low': $50$, `low': $129$, `middle': $122$ and `high': $130$.  There are no missing values.
\end{enumerate}

In these datasets, the objects  are represented by vectors.  Thus, to obtain the pairwise similarity matrix $\mathbf X$, we first compute the pairwise squared Euclidean distances between the  vectors and obtain matrix $\mathbf D$. Then, as proposed in~\cite{mlChehreghani16}, we convert the pairwise distances $\mathbf D$ to the similarity matrix $\mathbf X$ via $\mathbf X_{ij} = \max(\mathbf D) - \mathbf D_{ij} + \min(\mathbf D)$, where the $\max(.)$ and $\min(.)$ operations respectively give the maximum and the minimum of the elements in $\mathbf D$. An alternative transformation is an exponential function in the form of $\mathbf S_{ij} = \exp(-\frac{\mathbf X_{ij}}{\sigma^2})$, which requires fixing the free parameter $\sigma$ in advance. However, this task is nontrivial in unsupervised learning and the appropriate values of $\sigma$ coincide in a very narrow range~\cite{Luxburg:2007}. The other alternative is the cosine similarity, which suits better to textual and document datasets. On our datasets, we consistently obtain better results with the aforementioned transformation.

\begin{table*}
\caption{Performance of different methods with respect to the adjusted Mutual Information criterion. \emph{Shifted Min Cut} yields the best results in most of the cases.}  
\centering  
\begin{tabular}{c | c c c c c c c c}  
\hline\hline  
dataset & ShiftedMinCut & DSC & InImDyn & PSC & GMM & $K$-means & PIC & SP \\ [0.5ex]  
\hline  
\emph{Breast Tissue} & 0.4196 & 0.4305 & 0.4196 & 0.3606 & 0.3276 & 0.1809 & 0.4123 & \textbf{0.4507} \\  
\emph{Cloud} & \textbf{1.0000} & 0.3812 & 0.3543 & 0.3098 & 0.8511 & 0.3056 & 0.8597 & 0.8406 \\  
\emph{Ecoli} & 0.5414 & 0.4731 & 0.4368 & 0.5074 & \textbf{0.5800} & 0.5685 & 0.0542 & 0.4743 \\  
\emph{Forest Type} & 0.4352 & 0.2960 & 0.3109  & 0.2704 & 0.3877 & \textbf{0.5197} & 0.3875 & 0.3163 \\  
\emph{Heart} & \textbf{0.1570} & 0.0698 & 0.0602 & 0.0594 & 0.0813 & 0.0813 & 0.0509 & 0.1078 \\  
\emph{Lung Cancer} & 0.2362 & 0.0850 & 0.0859 & 0.0713 & 0.1684 & 0.1997 & 0.0380 & \textbf{0.2473} \\  
\emph{Parkinsons} & \textbf{0.1957} & 0.0738 & 0.0761 & 0.0511 & 0.0484 & 0.0136 & 0.0153 & 0.1631 \\  
\emph{Pima Indians Diabetes} & 0.1178 & 0.0561 & 0.0533 & 0.0368 & 0.0003 & 0.0257 & \textbf{0.1226} & 0.1200 \\  
\emph{SPECTF} & \textbf{0.1570} & 0.0698 & 0.0602 & 0.0419 & 0.0813 & 0.0813 & 0.0509 & 0.1078 \\ 
\emph{Statlog ACA} & \textbf{0.3907} & 0.1607 & 0.1498 & 0.1392 & 0.0038 & 0.0038 & 0.3570 & 0.3683 \\ 
\emph{Teaching Assistant} & \textbf{0.1041} & 0.0268 & 0.0357 & 0.0123 & 0.0353 & 0.0130 & 0.0470 & 0.0123 \\ 
\emph{User Knowledge Modeling} & 0.2926 & 0.1107 & 0.1198 & 0.0441 & \textbf{0.6100} & 0.2139 & 0.2454 & 0.1169 \\ 
\hline 
\end{tabular}
\label{table:RealWorld_Mutual_info} 
\end{table*}

\begin{table*}
\caption{Performance of different methods with respect to the adjusted Rand score. \emph{Shifted Min Cut} leads to better clustering solutions on most of the datasets.}
\centering 
\begin{tabular}{c | c c c c c c c c} 
\hline\hline
dataset & ShiftedMinCut & DSC & InImDyn & PSC & GMM & $K$-means & PIC & SP \\ [0.5ex] 
\hline 
\emph{Breast Tissue} & \textbf{0.3546} & 0.2929 & 0.2907 & 0.3100 & 0.2085 & 0.0943 & 0.3125 & 0.3037 \\ 
\emph{Cloud} & \textbf{1.0000} & 0.3573 & 0.3117 & 0.2926 & 0.8991 & 0.2429 & 0.9065 & 0.8899 \\ 
\emph{Ecoli} & \textbf{0.6801} & 0.4068 & 0.3299 & 0.5145 & 0.5574 & 0.4944 & 0.0378 & 0.4132 \\ 
\emph{Forest Type} & 0.3983 & 0.2225 & 0.2426 & 0.2027 & 0.3285 & \textbf{0.4987} & 0.3560 & 0.2454 \\ 
\emph{Heart} & \textbf{0.0917} & 0.0608 & 0.0449 & 0.0578 & 0.0467 & 0.0337 & 0.0721 & 0.0588 \\ 
\emph{Lung Cancer} & 0.2962 & 0.0689 & 0.0664 & 0.0215 & 0.1698 & 0.2294 & 0.0949 & \textbf{0.4201} \\ 
\emph{Parkinsons} & \textbf{0.1275} & 0.0129 & 0.0360 & 0.0234 & 0.0123 & 0.0162 & 0.0178 & 0.0419 \\ 
\emph{Pima Indians Diabetes} & 0.1535 & 0.0454 & 0.0460 & 0.0381 & 0.0010 & 0.0744 & \textbf{0.1617} & 0.1200 \\ 
\emph{SPECTF} & \textbf{0.0917} & 0.0608 & 0.0429 & 0.0570 & 0.0617 & 0.0617 & 0.0434 & 0.0898 \\ 
\emph{Statlog ACA} & \textbf{0.4913} & 0.1485 & 0.1307 & 0.1332 & 0.0022 & 0.0022 & 0.4550 & 0.4710 \\ 
\emph{Teaching Assistant} & \textbf{0.1170} & 0.0112 & 0.0127 & 0.0129 & 0.0220 & 0.0089 & 0.0322 & 0.0129 \\ 
\emph{User Knowledge Modeling} & 0.2912 & 0.0713 & 0.0778 & 0.0503 & \textbf{0.5680} & 0.1675 & 0.1449 & 0.1053 \\ 
\hline 
\end{tabular}
\label{table:RealWorld_Rand}
\end{table*}

\begin{table*}[t]
\caption{Performance of different methods with respect to the adjusted V-measure. In a consistent way to the two previous evaluation criteria, the \emph{Shifted Min Cut} method provides the best clustering results on most of the datasets.} 
\centering 
\begin{tabular}{c | c c c c c c c c} 
\hline\hline 
dataset & ShiftedMinCut & DSC & InImDyn & PSC & GMM & $K$-means & PIC & SP \\ [0.5ex]
\hline 
\emph{Breast Tissue} & \textbf{0.5563} & 0.4914 & 0.4999 & 0.4756 & 0.4389 & 0.2895 & 0.5230 & 0.5142 \\ 
\emph{Cloud} & \textbf{1.0000} & 0.5525 & 0.5239 & 0.5116 & 0.8520 & 0.3388 & 0.8605 & 0.8417 \\ 
\emph{Ecoli} & \textbf{0.6396} & 0.5339 & 0.5169 & 0.5203 & 0.6192 & 0.6321 & 0.1139 & 0.5364 \\ 
\emph{Forest Type} & 0.4835 & 0.3591 & 0.3866 & 0.2729 & 0.3937 & \textbf{0.5279} & 0.3982 & 0.3244 \\ 
\emph{Heart} & \textbf{0.1788} & 0.1186 & 0.1061 & 0.0689 & 0.0896 & 0.0896 & 0.0644 & 0.1131 \\ 
\emph{Lung Cancer} & 0.2730 & 0.2175 & 0.2310 & 0.1038 & 0.2030 & 0.2356 & 0.1117 & \textbf{0.2987} \\ 
\emph{Parkinsons} & \textbf{0.2196} & 0.1255 & 0.1327 & 0.0814 & 0.0130 & 0.0105 & 0.0291 &0.1798 \\ 
\emph{Pima Indians Diabetes} & 0.1227 & 0.0867 & 0.0838 & 0.0693 & 0.0013 & 0.0295 & \textbf{0.1276} & 0.1200 \\ 
\emph{SPECTF} & \textbf{0.1788} & 0.1186 & 0.1061 & 0.0819 & 0.0896 & 0.0896 & 0.0644 & 0.1131 \\ 
\emph{Statlog ACA} & \textbf{0.3927} & 0.2396 & 0.2267 & 0.2068 & 0.0099 & 0.0099 & 0.3632 & 0.3720 \\ 
\emph{Teaching Assistant} & \textbf{0.1156} & 0.0615 & 0.0770 & 0.0253 & 0.0583 & 0.0263 & 0.0606 & 0.0253 \\ 
\emph{User Knowledge Modeling} & 0.3384 & 0.1527 & 0.1697 & 0.0573 & \textbf{0.6191} & 0.2278 & 0.2603 & 0.1325 \\ 
\hline 
\end{tabular}
\label{table:RealWorld_V_Measure} 
\end{table*}

\paragraph{Methods.}

We compare \emph{Shifted Min Cut} against several alternative methods developed for clustering. We consider the following methods:
 i) \emph{Dominant Set Clustering} (DSC),
 ii) \emph{InImDyn},
 iii) \emph{P-Spectral Clustering} (PSC),
iv) \emph{Gaussian Mixture Model} (GMM),
v) \emph{$K$-means},
vi) \emph{Power Iteration Clustering} (PIC), and
vii) \emph{Spectral Clustering} (SC).

The chosen baselines belong to different clustering approaches which  cover a wide range of alternative viewpoints for clustering, e.g., those based on a cost function, probabilistic methods, game-theoretic methods and spectral methods. With the GMM method, we obtain the probabilistic assignment of the objects to the clusters.  Then, we assign each object to the most probable cluster.
 The developed clustering perspective can  potentially  be combined with the recent developments proposed in particular for cost-based clustering methods. For example, a category of recent clustering methods aim to combine deep representation learning methods with clustering \cite{abs-2007-14524,YangCL019}, or develop approximate and distributed clustering methods. Such  contributions are orthogonal to our contribution and, in principle,  can be combined with \emph{Shifted Min Cut} as well. On the other hand, considering the relation between \emph{Shifted Min Cut} and \emph{Correlation Clustering}, with the co-authors, we have recently \cite{ThielCD19} studied the performance of the local search optimization compared to a wide range of approximate methods developed for \emph{Correlation Clustering} and have demonstrated both efficiency and effectiveness for the local search method.

\paragraph{Evaluation criteria.}
We have access to the ground truth solutions for the datasets. These labels may play the role of an expert (reference) that tells us the desired clustering solution. Thus, we can use them to evaluate the results of different methods.  We note that we do not employ them to infer the clustering solution, they are only used for evaluation. Therefore, we are still in  unsupervised setting which assumes no data label is used to obtain the results. This evaluation procedure is recommended in \cite{manning2008introduction}  consistent with several studies, e.g., \cite{Dhillon:2004,LinC10PIC,LiuLY13,ThielCD19,YangCL019}.
Thereby, we  compare the true (given) clustering labels and the estimated  solutions to investigate quantitatively the performance of each method. For this purpose, we consider three criteria:
\begin{enumerate}
  \item adjusted Mutual Information~\cite{Vinh:2010}: the mutual information between the two estimated and true clustering solutions,
  \item adjusted Rand score~\cite{hubert1985comparing}: the similarity between the solutions,  and
  \item V-measure~\cite{RosenbergH07}: the harmonic mean of homogeneity and completeness.
\end{enumerate}

We compute the adjusted variant of these criteria such that they give zero scores for random solutions.

\paragraph{Results.}

We study the performance of different methods from two perspectives in order to distinguish between the quality of a method/costs function and its optimization. The former implies how good a particular method/cost function is (given that it can be optimized in a proper way) while the later focuses on the optimization aspects of the method/cost function. We run each method $100$ times with different random initializations.
In the first type of study, we choose  the best solution in terms of the cost or likelihood among the $100$ different runs for each method. We note that we do not choose the best results in terms of the evaluation criteria. This helps to gain a sense that the optimization is done properly, and we may not suffer from very poor local optima, and therefore, we can investigate the performance of the method or the cost function regardless of its optimization.

Tables \ref{table:RealWorld_Mutual_info}, \ref{table:RealWorld_Rand} and \ref{table:RealWorld_V_Measure} show the results of the first type of study for different clustering methods on the UCI datasets respectively with respect to the Mutual Information criterion, the Rand score and the V-measure. We observe that on most of the datasets, \emph{Shifted Min Cut} yields the best scores. In the cases that the method is not the best, it is usually among top choices. DSC and InImDyn perform very similarly, consistent to the results in~\cite{BuloPB11}. PIC works well only when there are few clusters in the dataset. The reason is that it computes an one-dimensional embedding of the data and then applies $K$-means. However, such an embedding might confuse some clusters when there exist many of them in the dataset \cite{mlChehreghani16}. PSC is significantly slower than the other methods and also yields suboptimal results, as reported by several previous studies as well. Other methods are efficient and perform within few seconds.

\begin{table*}
\begin{adjustwidth}{-1cm}{-1cm}
\caption{Average performance (and the standard deviation shown in brackets) for  different methods over $100$ runs with respect to  adjusted Mutual Information, where \emph{Shifted Min Cut } often yields the most promising results.}  
\centering  
\begin{tabular}{c | c c c c c}  
\hline\hline  
dataset & ShiftedMinCut  & GMM & $K$-means & PIC & SP \\ [0.5ex]  
\hline  
\emph{Breast Tissue} & $0.4191(0.0031)$  & $0.3031(0.0365)$ & $0.1794(0.0371)$ & $0.3883(0.0470)$ & $\mathbf{0.4393(0.0129)}$   \\  
\emph{Cloud} & $\mathbf{1.0000(0.0000)}$ & $0.8510(0.0152)$ & $0.3019(0.0141)$ & $0.8588(0.0060)$ & $0.8392(0.0071)$ \\  
\emph{Ecoli} & $0.5304(0.0119)$ & $\mathbf{0.5792(0.0374)}$ & 0.5673(0.0294) & $0.0493(0.0177)$ & $0.4728(0.0032)$ \\  
\emph{Forest Type} & $0.4085(0.0416)$ & $0.3542(0.0290)$ & $\mathbf{0.5169(0.0330)}$ & $0.3881(0.0147)$ & $0.3124(0.0010)$ \\  
\emph{Heart} & $\mathbf{0.1562(0.0024)}$ & $0.0817(0.0025)$ & $0.0832(0.0019)$ & $0.0528(0.0133)$ & $0.0820(0.0304)$ \\  
\emph{Lung Cancer} & $0.2174(0.0684)$ & $0.1678(0.0719)$ & $0.1720(0.0455)$ & $0.0460(0.0397)$ & $\mathbf{0.2389(0.0250)}$ \\  
\emph{Parkinsons} & $\mathbf{0.1932(0.0046)}$ & $0.0380(0.0241)$ & $0.0256(0.0305)$ & $0.0169(0.0134)$ & $0.1443(0.0251)$\\  
\emph{Pima Indians Diabetes} & $0.0972(0.0329)$ & $0.0003(0.0008)$ & $0.0285(0.0020)$ & $\mathbf{0.1233(0.0042)}$ & $0.1186(0.0049)$ \\  
\emph{SPECTF} & $\mathbf{0.1561(0.0026)}$ & $0.0821(0.0020)$ & $0.0806(0.0031)$ & $0.0503(0.0155)$ & $0.1019(0.0229)$ \\  
\emph{Statlog ACA} & $\mathbf{0.3842(0.0279)}$ & $0.0116(0.0031)$ & $0.0087(0.0033)$ & $0.3521(0.0045)$ & $0.3648(0.0037)$ \\  
\emph{Teaching Assistant} & $\mathbf{0.0934(0.0380)}$ & $0.0346(0.0123)$ & $0.0045(0.0085)$ & $0.0412(0.0124)$ & $0.0106(0.0073)$ \\  
\emph{User Knowledge Modeling} & $0.2735(0.0601)$ & $\mathbf{0.5536(0.0962)}$ & $0.2026(0.0648)$ & $0.2470(0.0131)$ & $0.1043(0.0239)$ \\  
\hline  
\end{tabular}
\label{table:RealWorld_Mutual_info_detail}  
\end{adjustwidth}
\end{table*}

\begin{table*}
\begin{adjustwidth}{-1cm}{-1cm}
\caption{Average performance (and the standard deviation) for different methods with respect to  adjusted Rand score. Consistent with adjusted Mutual Information, \emph{Shifted Min Cut } yields the best results on most of the datasets .}  
\centering  
\begin{tabular}{c | c c c c c}  
\hline\hline  
dataset & ShiftedMinCut  & GMM & $K$-means & PIC & SP \\ [0.5ex]  
\hline  
\emph{Breast Tissue} & $\mathbf{0.3543(0.0051)}$ & $0.1756(0.0354)$ & $0.1071(0.0361)$ & $0.2642(0.0697)$ & $0.3030(0.0311)$   \\  
\emph{Cloud} & $\mathbf{1.0000(0.0000)}$ & $0.8636(0.0309)$ & $0.2419(0.0015)$ & $0.9067(0.0018)$ & $0.8865(0.0029)$ \\  
\emph{Ecoli} & $\mathbf{0.6826(0.0113)}$ & $0.5303(0.0611)$ & $0.4598(0.0686)$ & $0.0193(0.0175)$ & $0.3939(0.0098)$ \\  
\emph{Forest Type} & $0.3894(0.0267)$ & $0.2792(0.0427)$ & $\mathbf{0.4880(0.0360)}$ & $0.3429(0.0243)$ & $0.2388(0.0009)$ \\  
\emph{Heart} & $\mathbf{0.0885(0.0041)}$ & $0.0471(0.0022)$ & $0.0259(0.0029)$ & $0.0518(0.0212)$ & $0.0498(0.0113)$ \\  
\emph{Lung Cancer} & $0.2546(0.0798)$ & $0.1594(0.0949)$ & $0.2079(0.0705)$ & $0.0844(0.0297)$ & $\mathbf{0.4172(0.0133)}$ \\  
\emph{Parkinsons} & $\mathbf{0.1268(0.0079)}$ & $0.0282(0.0286)$ & $0.0217(0.0389)$ & $0.0159(0.0121)$ & $0.0387(0.0081)$ \\  
\emph{Pima Indians Diabetes} & $0.1317(0.0329)$ & $0.0010(0.0003)$ & $0.0736(0.0027)$ & $\mathbf{0.1593(0.0028)}$ & $0.1135(0.0058)$ \\  
\emph{SPECTF} & $\mathbf{0.0903(0.0036)}$ & $0.0598(0.0033)$ & $0.0608(0.0042)$ & $0.0516(0.0248)$ & $0.0894(0.057)$ \\  
\emph{Statlog ACA} & $\mathbf{0.4825(0.0485)}$ & $0.0031(0.0007)$ & $0.0030(0.0008)$ & $0.4525(0.0042)$ & $0.4590(0.0162)$ \\  
\emph{Teaching Assistant} & $\mathbf{0.0883(0.0421)}$ & $0.0248(0.0061)$ & $0.0024(0.0069)$ & $0.0248(0.0115)$ & $0.0201(0.0097)$ \\ 
\emph{User Knowledge Modeling} & $0.2729(0.0520)$ & $\mathbf{0.4972(0.0788)}$ & $0.1542(0.0377)$ & $0.1495(0.0121)$ & $0.1046(0.0174)$ \\  
\hline
\end{tabular}
\label{table:RealWorld_Rand_detail} 
\end{adjustwidth}
\end{table*}

\begin{table*}
\begin{adjustwidth}{-1cm}{-1cm}
\caption{Average performance (and the standard deviation) for different methods with respect to  adjusted V-measure. We observe that the average results are consistent with the results from the first type of study and \emph{Shifted Min Cut} performs well compared to the alternatives. } 
\centering
\begin{tabular}{c | c c c c c}  
\hline\hline  
dataset & ShiftedMinCut  & GMM & $K$-means & PIC & SP \\ [0.5ex] 
\hline 
\emph{Breast Tissue} & $\mathbf{0.5558(0.0023)}$ & $0.4015(0.0326)$  & $0.2711(0.0329)$ & $0.5193(0.0190)$ & $0.5113(0.0174)$  \\  
\emph{Cloud} & $\mathbf{1.0000(0.0000)}$ & $0.8329(0.0137)$ & $0.3402(0.0011)$ & $0.8602(0.0011)$ & $0.8395(0.0028)$ \\  
\emph{Ecoli} & $\mathbf{0.6380(0.0149)}$ & $0.6029(0.0355)$ & $0.6007(0.0432)$ & $0.0884(0.0372)$ & $0.5296(0.0031)$ \\  
\emph{Forest Type} & $0.4712(0.0402)$ & $0.3835(0.0178)$ & $\mathbf{0.5211(0.0327)}$ & $0.3936(0.0147)$ & $0.3169(0.0017)$ \\  
\emph{Heart} & $\mathbf{0.1761(0.0037)}$ & $0.0841(0.0053)$ & $0.0864(0.0019)$ & $0.0608(0.0142)$ & $0.1066(0.0036)$ \\  
\emph{Lung Cancer} & $0.2578(0.0667)$ & $0.2004(0.0559)$ & $0.2194(0.0623)$ & $0.1094(0.0350)$ & $\mathbf{0.2858(0.0096)}$ \\  
\emph{Parkinsons} & $\mathbf{0.2151(0.0036)}$ & $0.0134(0.0102)$ & $0.0152(0.0096)$ & $0.0216(0.0142)$ & $0.1506(0.0243)$ \\  
\emph{Pima Indians Diabetes} & $0.1026(0.0328)$ & $0.0011(0.0006)$ & $0.0217(0.0058)$ & $\mathbf{0.1242(0.0042)}$ & $0.1205(0.0016)$ \\  
\emph{SPECTF} & $\mathbf{0.1729(0.0040)}$ & $0.0850(0.0028)$ & $0.0891(0.0015)$ & $0.0592(0.0058)$ & $0.0974(0.0108)$ \\  
\emph{Statlog ACA} & $\mathbf{0.3845(0.0463)}$ & $0.0040(0.0030)$ & $0.0033(0.0051)$ & $0.3627(0.0015)$ & $0.3655(0.0147)$ \\  
\emph{Teaching Assistant} & $\mathbf{0.1062(0.0375)}$ & $0.0506(0.0123)$ & $0.0170(0.0086)$ & $0.0535(0.0110)$ & $0.0284(0.0072)$ \\  
\emph{User Knowledge Modeling} & $0.3027(0.0592)$ & $\mathbf{0.5834(0.0674)}$ & $0.2250(0.0540)$ & $0.2612(0.0131)$ & $0.1257(0.0236)$ \\  
\hline  
\end{tabular}
\label{table:RealWorld_VI_detail}  
\end{adjustwidth}
\end{table*}

In the second type of study, in order to investigate the optimization itself, we report the average scores and the respective standard deviations over the $100$ different runs for each method. We note that DSC, InImDyn and PSC are non-randomized algorithmic procedures that do not show randomness in the performance and their results are stable among different runs. Therefore, we do not need to report their results here. Tables \ref{table:RealWorld_Mutual_info_detail}, \ref{table:RealWorld_Rand_detail} and  \ref{table:RealWorld_VI_detail} show such optimization variability results for different UCI datasets (i.e., the average results and the respective standard deviations shown in brackets). We observe that the results are consistent among different runs and the better methods in Tables \ref{table:RealWorld_Mutual_info},~\ref{table:RealWorld_Rand} and~\ref{table:RealWorld_V_Measure} perform well on average too, i.e., the results from the first type of study and the second type of study are consistent in overall. In particular, \emph{Shifted Min Cut} yields the most promising results in this type of study as well.  The results also confirm the effectiveness of the optimization based on local search, a method that is nowadays used widely in different machine learning paradigms.

\paragraph{Experiments on real-world data.}

In the following, we investigate the performance of different clustering methods on two real-world datasets:
\begin{enumerate}
  \item \emph{DS1}: This dataset,  collected by a document processing company, contains the vectors of $675$ scanned documents, wherein each document is represented in a $4096$ dimensional space using different textual, image, structural and other features. The documents are placed within $56$ clusters with different sizes, that makes the clustering task challenging. The size of the clusters varies from few documents to more than $200$ documents. The features are real-valued.
  \item \emph{DS2}: In this dataset, we collect articles about $5$ different Computer Science subjects: `artificial intelligence', `software', `hardware', `networks' and `algorithms'. For each category, we collect $1500$ articles, thus in total there are $7500$ articles in this dataset. We computer the \emph{tf-idf} vectors for each article, thus the attributes are numerical. There are no missing values.
\end{enumerate}

\begin{table*}
\caption{Performance of different methods on DS1. On this dataset, \emph{Shifted Min Cut} yields superior results compared to the alternatives.}  
\centering  
\begin{tabular}{c | c c c c c c c c}  
\hline\hline  
 & ShiftedMinCut & DSC & InImDyn & PSC & GMM & $K$-means & PIC & SP \\ [0.5ex]  
\hline 
\emph{Mutual Information} & 0.6046 & 0.2951 & 0.2786 & 0.1572 & \textbf{0.6124} & 0.5880 & 0.5994 & 0.3525  \\  
\emph{Rand score} & \textbf{0.4632} & 0.2101 & 0.2067 & 0.0762 & 0.2945 & 0.2434 & 0.2640 & 0.1384  \\  
\emph{V-measure} & \textbf{0.8107} & 0.4884 & 0.4655 & 0.3119 & 0.7996 & 0.7854 & 0.7733 & 0.5874  \\  
\hline  
\end{tabular}
\label{table:RealWorld_DS1}  
\end{table*}

\begin{table*}
\caption{Performance of different methods on DS2 where \emph{Shifted Min Cut} leads to the best overall performance.}
\centering  
\begin{tabular}{c | c c c c c c c c}  
\hline\hline  
 & ShiftedMinCut & DSC & InImDyn & PSC & GMM & $K$-means & PIC & SP \\ [0.5ex] 
\hline  
\emph{Mutual Information} & \textbf{0.5621} & 0.3011 & 0.3180 & 0.3572 & 0.4793 & 0.5103 & 0.4272 & 0.5475  \\  
\emph{Rand score} & 0.5316 & 0.2780 & 0.2621 & 0.2752 & 0.4648 & 0.4512 & 0.4484 & \textbf{0.5528}  \\  
\emph{V-measure} & \textbf{0.6179} & 0.3126 & 0.3441 & 0.3270 & 0.5245 & 0.5523 & 0.5164 & 0.5715  \\  
\hline  
\end{tabular}
\label{table:RealWorld_DS2}  
\end{table*}

Similar to the experiments on the UCI datasets, we first study the performance of the methods when the optimization is performed properly, i.e., when we pick the best results in terms of the cost function or the likelihood over $100$ different runs.
Tables \ref{table:RealWorld_DS1} and \ref{table:RealWorld_DS2}  show the performance of different clustering methods with respect to the  evaluation criteria  on DS1 and DS2. We observe that only \emph{Shifted Min Cut} yields high scores with respect to all  criteria. In most of the cases, \emph{Shifted Min Cut} results in the best scores. Otherwise, it is still competitive compared to the best choice.

Finally, we study the optimization variability, i.e., the average results and the respective standard deviations among the 100 runs. The results with respect to different evaluation criteria are shown in Tables \ref{table:RealWorld_DS1_detail} and \ref{table:RealWorld_DS2_detail} corresponding to DS1 and DS2. Similar to the experiments on the UCI datasets, we observe that the optimization variability results follow the same trend as the results in Tables  \ref{table:RealWorld_DS1} and \ref{table:RealWorld_DS2}. This indicates that the average results are consistent with the results obtained based on the best values of the cost function or the likelihood. On the other hand, \emph{Shifted Min Cut} yields the most promising results either in average or when choosing the best solutions in terms of cost/likelihood.

\begin{table*}
\begin{adjustwidth}{-0.4cm}{-0.4cm}
\caption{Average performance (and the standard deviation shown in brackets) for different methods over $100$ runs with respect to  different evaluation criteria on DS1.}
\centering  
\begin{tabular}{c | c c c c c}  
\hline\hline  
 & ShiftedMinCut & GMM & $K$-means & PIC & SP \\ [0.5ex]  
\hline  
\emph{Mutual Information} & $0.5785(0.0352)$ & $\mathbf{0.5922(0.0294)}$ & $0.5773(0.0467)$ & $0.5367(0.0386)$ & $0.3493(0.0460)$  \\  
\emph{Rand score} & $\mathbf{0.4239(0.0506)}$ & $0.2872(0.0377)$ & $0.2278(0.0491)$ & $0.2614(0.0195)$ & $0.1146(0.0207)$  \\  
\emph{V-measure} & $\mathbf{0.7914(0.0345)}$ & $0.7806(0.0272)$ & $0.7590(0.0338)$ & $0.7568(0.0248)$ & $0.5506(0.0512)$  \\ 
\hline  
\end{tabular}
\label{table:RealWorld_DS1_detail}  
\end{adjustwidth}
\end{table*}

\begin{table*}
\begin{adjustwidth}{-0.4cm}{-0.4cm}
\caption{Average performance (and the standard deviation)  for different methods over $100$ runs with respect to  different evaluation criteria on DS2. On both DS1 and DS2, \emph{Shifted Min Cut} yields more promising results in overall.}
\centering  
\begin{tabular}{c | c c c c c}  
\hline\hline  
 & ShiftedMinCut & GMM & $K$-means & PIC & SP \\ [0.5ex]  
\hline  
\emph{Mutual Information} & $\mathbf{0.5617(0.0218)}$ & $0.4427(0.0465)$ & $0.4519(0.0421)$ & $0.4185(0.0307)$ & $0.5118(0.0535)$  \\  
\emph{Rand score} & $0.5185(0.0266)$ & $0.4502(0.0379)$ & $0.4327(0.0289)$ & $0.4151(0.0342)$ & $\mathbf{0.5424}(0.0372)$  \\  
\emph{V-measure} & $\mathbf{0.5891(0.0335)}$ & $0.5081(0.0507)$ & $0.5122(0.0410)$ & $0.4870(0.0441)$ & $0.5385(0.0463)$  \\ 
\hline  
\end{tabular}
\label{table:RealWorld_DS2_detail} 
\end{adjustwidth}
\end{table*}

\section{Conclusion}
This paper investigates an alternative approach for regularizing the \emph{Min Cut} cost function in order to avoid the  appearance of singleton clusters, where the regularization term is added to the cost function, instead of dividing the \emph{Min Cut} clusters by a cluster dependent factor. We, in particular, studied the case where the regularization term leads to subtracting the pairwise similarities by the regularization factor. Then, we only need to apply the base \emph{Min Cut}, but on the (adaptively) shifted similarities instead of the original data. In the following, we developed an efficient \emph{local search} algorithm to optimize (locally) the \emph{Shifted Min Cut} cost function and studied its fast theoretical convergence rate. Thereafter, we discussed that unlike \emph{Min Cut}, many other common clustering cost functions are invariant with respect to the shift of pairwise similarities. Finally, we performed extensive experiments on several UCI and real-world datasets to demonstrate the superior performance of \emph{Shifted Min Cut} according to different evaluation criteria.

\section*{Acknowledgement}
This work was partially supported by the Wallenberg AI, Autonomous Systems and Software Program (WASP) funded by the Knut and Alice Wallenberg Foundation.
Parts of this work have been done at Xerox Research Centre Europe (Naver Labs Europe).

\bibliographystyle{plain}
\bibliography{references}

\end{document}